\documentclass[11pt]{article}
\usepackage[left=1.25in, top=1in, bottom=1in, right=1.25in]{geometry}
\parskip 2.mm
\parindent 0.mm

\usepackage{authblk}

\usepackage{wrapfig}
\usepackage{graphicx}
\usepackage[normalem]{ulem}
\usepackage{amsmath,amsfonts}
\usepackage{algorithm}
\usepackage{wrapfig}
\usepackage{multirow}
\usepackage{dsfont}
\usepackage{natbib}
\usepackage{comment}

\usepackage{booktabs,colortbl,multirow}
\usepackage{subcaption}

\usepackage[breaklinks=true,bookmarks=true]{hyperref}

\usepackage{enumitem}

\usepackage{microtype}
\usepackage{graphicx}
\usepackage{subcaption}
\usepackage{booktabs} %
\usepackage{enumitem}
\usepackage{soul}
\usepackage{multirow}
\usepackage{array}
\usepackage{dsfont}

\usepackage[table]{xcolor}

\usepackage{hyperref}

\usepackage{amsmath}
\usepackage{amssymb}
\usepackage{mathtools}
\usepackage{amsthm}

\usepackage[capitalize]{cleveref}
	
\usepackage{color, colortbl}
\definecolor{Gray}{gray}{0.9}
\theoremstyle{plain}
\newtheorem{theorem}{Theorem}[section]
\newtheorem{proposition}[theorem]{Proposition}
\newtheorem{lemma}[theorem]{Lemma}

\theoremstyle{definition}
\newtheorem{definition}[theorem]{Definition}
\newtheorem{assumption}[theorem]{Assumption}
\theoremstyle{remark}
\newtheorem{remark}[theorem]{Remark}

\usepackage[textsize=tiny]{todonotes}

\usepackage{amsmath,amsfonts,bm}
\usepackage[mathscr]{eucal}

\crefname{section}{Sec.}{Sec.}
\crefname{figure}{Fig.}{Fig.}
\crefname{appendix}{App.}{App.}

\def\eqref#1{equation~\ref{#1}}

\def\1{\bm{1}}

\def\real{{\mathbb{R}}}

\newcommand{\DCal}{\mathscr{D}}
\newcommand{\ECal}{\mathscr{E}}

\newcommand{\SCal}{\mathscr{S}}

\newcommand{\UCal}{\mathscr{U}}

\newcommand{\XCal}{\mathscr{X}}
\newcommand{\YCal}{\mathscr{Y}}

\newcommand{\RF}{\mathfrak{R}}

\newcommand{\FC}{\mathcal{F}}
\newcommand{\OC}{\mathcal{O}}

\newcommand{\GC}{\mathcal{G}}

\newcommand{\MC}{\mathcal{M}}
\newcommand{\NC}{\mathcal{N}}

\newcommand{\UC}{\mathcal{U}}

\newcommand{\WC}{\mathcal{W}}

\def\drm{{\mathrm{d}}}

\DeclareMathAlphabet{\mathsfit}{\encodingdefault}{\sfdefault}{m}{sl}
\SetMathAlphabet{\mathsfit}{bold}{\encodingdefault}{\sfdefault}{bx}{n}

\newcommand{\R}{\mathbb{R}}

\DeclareMathOperator*{\argmax}{arg\,max}

\title{Teacher Guided Training:\\ An Efficient Framework for Knowledge Transfer}

\author{Manzil Zaheer*}
\author{Ankit Singh Rawat*}
\author{\\ \vspace{-4mm} Seungyeon Kim}
\author{Chong You}
\author{Himanshu Jain}
\author{Andreas Veit}
\author{\\Rob Fergus}
\author{Sanjiv Kumar}
\affil{Google Research and DeepMind New York, USA
{\small \texttt{\{manzilzaheer,ankitsrawat,seungyeonk,cyou,himj,aveit,robfergus,sanjivk\}@google.com}}
}
\date{}

\sloppy
\frenchspacing

\begin{document}

\maketitle

{\makeatletter{\renewcommand*{\@makefnmark}{}
\footnotetext{* Equal contribution}\makeatother}}
\vspace{-1cm}

\vspace{-3mm}
\begin{abstract}
The remarkable performance gains realized by large pretrained models, e.g., GPT-3, hinge on the massive amounts of data they are exposed to during training. Analogously, distilling such large models to compact models for efficient deployment also necessitates a large amount of (labeled or unlabeled) training data. In this paper, we propose the \emph{teacher-guided training} (TGT) framework for training a high-quality compact model that leverages the knowledge acquired by pretrained \emph{generative} models, while obviating the need to go through a large volume of data. TGT exploits the fact that the teacher has acquired a good representation of the underlying data domain, which typically corresponds to a much lower dimensional manifold than the input space. Furthermore, we can use the teacher to explore input space more efficiently through sampling or gradient-based methods; thus, making TGT especially attractive for limited data or long-tail settings. We formally capture this benefit of proposed data-domain exploration in our generalization bounds. We find that TGT can improve accuracy on several image classification benchmarks as well as a range of text classification and retrieval tasks.
\end{abstract}

\section{Introduction}
\label{sec:intro}
\vspace{-2mm}

Recent general purpose machine learning models (e.g., BERT~\citep{bert}, DALL-E~\citep{dalle}, SimCLR~\citep{chen2020-simclr}, Perceiver~\citep{perceiver}, GPT-3~\citep{gpt3}), trained on broad data at scale, have demonstrated adaptability to a diverse range of downstream tasks.
Despite being trained in unsupervised (or so-called self-supervised) fashion, these models have been shown to capture highly specialized information in their internal representations such as relations between entities~\cite{heinzerling-inui-2021-language} or object hierarchies from images~\citep{weng2021unsupervised}.

Despite their impressive performance, the prohibitively high inference cost of such large models prevents their widespread deployment. A standard approach to reducing the inference cost while preserving performance is to train a compact (student) model via knowledge distillation~\citep{bucilua2006_compression, hinton2015_distilling} from a large (teacher) model. However, existing distillation methods require a large amount of training data (labeled or unlabeled) for knowledge transfer. For each data point, the teacher must be evaluated, making the process computationally expensive~\cite{xie2020_CVPR, he2021gal, distill_bert}. 
This is compounded by the need to repeat the distillation process separately for every down-stream task, each with its own training set. Enabling efficient distillation is thus an important challenge. Additionally, minimizing the number of distillation samples would especially benefit
low-data down-stream tasks, (e.g.~ those with long-tails). %

Another inefficiency with standard distillation approaches is that within each evaluation of the teacher, only the final layer output (aka logits) is utilized. This ignores potentially useful internal representations which can also be levered for knowledge transfer. Various extensions have been proposed in the literature along these lines ~(see, e.g., \citep{sun2020-mobilebert, aguilar2020knowledge,li2019layer,sun-etal-2019-patient} and references therein).  %
However, despite their success, most use the teacher model in a black-box manner, and do not fully utilize the domain understanding it contains~\citep{cho2019_kd, stanton2021does}. In these approaches, the teacher is used {\em passively} as the input sample distribution is fixed and does not adapt to the student model performance. Consequently, these forms of distillation do not lead to faster training of a high-performance student model.  

\begin{wrapfigure}{r}{0.5\textwidth}
    \vspace{-5mm}
    \centering
    \includegraphics[width=\linewidth,trim=10mm 22mm 30mm 15mm, clip]{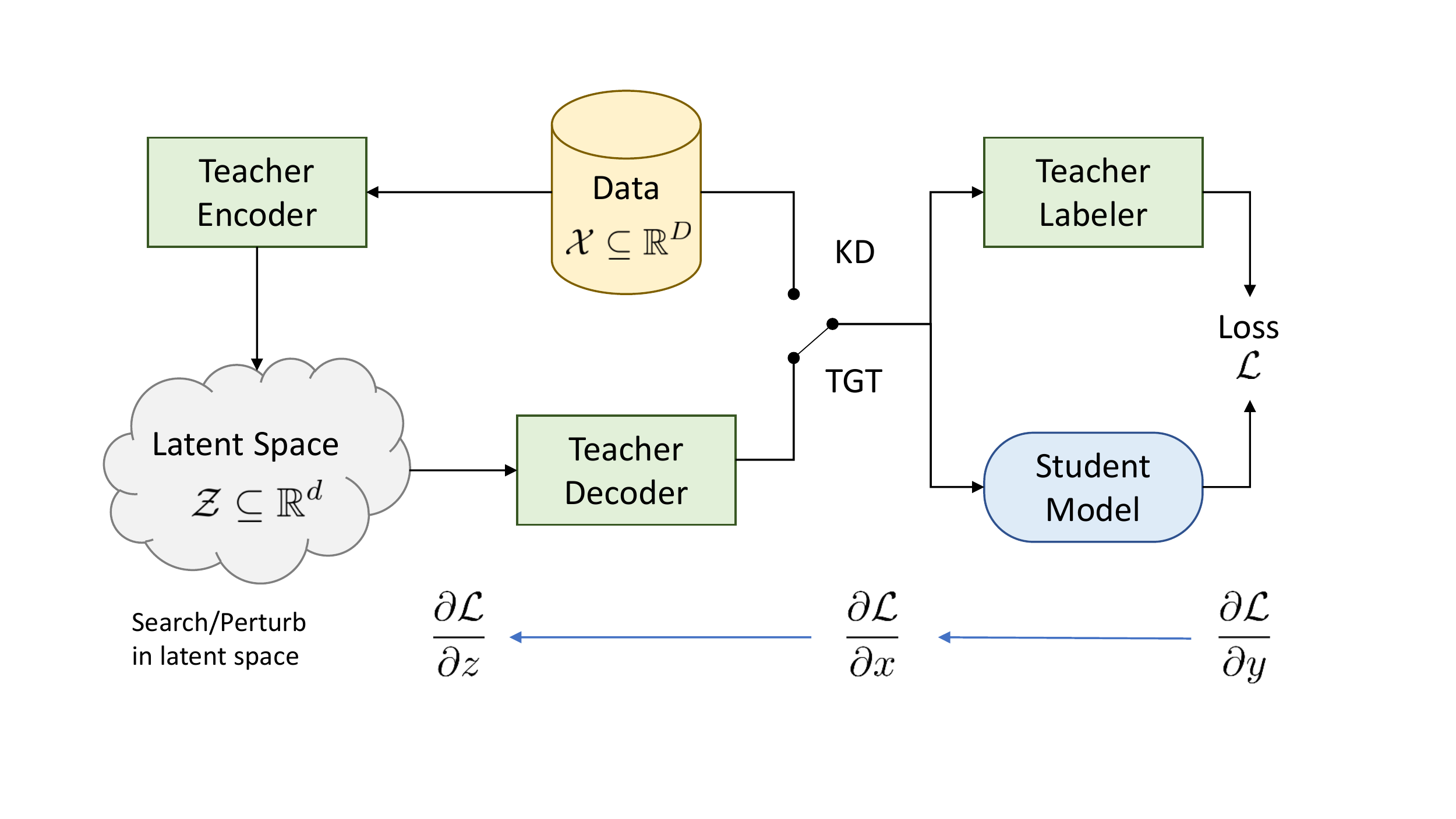}
    \caption{An overview of the proposed {\em teacher guided training} (TGT) framework. Given a learning task, the framework leverages a large teacher with a pretrained \textbf{{generator}} and \textbf{{ labeler}} that exhibits high performance on the task. In particular, we assume that the generator consists of an \textbf{{encoder}} and a \textbf{decoder}. %
    TGT performs three key operations during student model training: (1) Given an original training instance, by using the teacher generator, identify a novel task-relevant instance. We search for %
    informative instances in the lower dimensional latent space, where we can propagate the gradient to. %
    (2) Obtain (soft) labels for the original and newly generated training instance from the teacher labeler; and (3) Minimize the student training objective that depends on the original dataset and the newly generated instances and their corresponding labels produced by the teacher labeler. %
    Note that TGT reduces to standard knowledge distillation %
    in the absence of the generator component. 
    }
    \label{fig:overview}
    \vspace{-3mm}
\end{wrapfigure}
In this work, we go beyond the passive application of large teacher models for training compact student models, and leverage the domain understanding captured by the teacher to generate new informative training instances that can help the compact model achieve higher accuracy with fewer samples and thus enable reduced training time. 
In particular, we propose the {\em teacher guided training} (TGT) framework for a more efficient transfer of knowledge from large models to a compact model. TGT relies on the fact that teacher's internal representation of data often lies in a much smaller dimensional manifold than the input dimension. Furthermore, we can use teacher to help guide training by identifying the directions where the student's current decision boundary starts to diverge from that of the teacher, e.g., via backpropagating through the teacher to identify regions of disagreement.

We also give a formal justification for the TGT algorithm, showing that leveraging the internal data representation of large models enables better generalization bounds for the student model. Given $n$ instances in a $D$-dimensional space the generalization gap for learning a Lipschitz decision boundary of an underlying classification task decays only as $\OC\big(n^{-\frac{1}{D}}\big)$~\citep{gyorfi2002distribution}. In contrast, assuming that the large model can learn a good data representation in a $d$-dimensional latent space, the TGT framework realizes a generalization gap of $\OC\big(n^{-\frac{1}{d}} + \WC(\DCal, \DCal^{t})\big)$, where $\WC(\DCal, \DCal^{t})$ denotes the Wasserstein distance between the data distribution $\DCal$ and the distribution $\DCal^{\rm t}$ learned by the underlying generative teacher model. Typically $d \ll D$, thus TGT ensures much faster convergence whenever we employ a high-quality generative teacher model. This makes TGT especially attractive for low-data or long-tail regimes.

In order to realize TGT, we take advantage of the fact that  most of the unsupervised pretrained models like Transformers, VAE, and GANs have two components: 
(1) an encoder that maps data to a latent representation, and 
(2) a decoder that transforms the latent representation back to the original data space. We utilize this latent space for the data representations learned by the teacher model to efficiently search for the regions of mismatch between the teacher and student's decision boundaries. This search can take the form of either (i) a zero-order approach involving random perturbation or (ii) a first-order method exploring along the direction of the gradient of a suitably defined distance measure between the teacher and student models. 

Some of these pretrained models, particularly in NLP such as T5~\citep{raffel2020_t5}, %
can also provide labels for a downstream task and act as a sole teacher. However, our approach is sufficiently general to utilize separate pretrained models for generative and discriminative (labeler) functions (cf.~\cref{fig:overview}), e.g., we %
employ an
image BiGAN as generator and an %
EfficientNet as labeler for an image classification task.

Our main contributions are summarized as follows:
 \vspace{-1mm}
\begin{enumerate}[leftmargin=5mm, itemsep=2mm, partopsep=0pt,parsep=0pt]
    \item We introduce the TGT framework, a conceptually simple and scalable approach to distilling knowledge from a large teacher into a smaller student. TGT adaptively changes the distribution of distillation examples, yielding higher performing student models with fewer training examples. 
    \item We provide theoretical justifications for utilizing the latent space of the teacher generator in the TGT framework, which yields tighter generalization bounds.  
    \item We empirically demonstrate the superiority of TGT to existing state-of-the-art distillation approaches, also showing results on both vision and NLP tasks, unlike most previous work which is specialized to one domain.
\end{enumerate}

\section{Related Work}
\label{sec:related}
\vspace{-3mm}

Our proposed TGT framework can be considered a form of data augmentation where data is dynamically added at points of  current discrepancy between the teacher and student. 
Next, we provide a brief overview of how data augmentation has been used in the context of distillation.

{\bf Using pseudo labels.}~The earliest line of work involves using \textit{consistency regularization}~\citep{sajjadi2016, laine_iclr2017, tarvainen2017_mean} to obtain pseudo labels for unlabelled data where a model is expected to make consistent predictions on an unlabeled instance and its augmented versions, cf.~\citep[inter alia]{miyato2019_vat, xie2020_uda, verma2019_ict, berthelot2019_mixmatch, sohn2020_fixmatch, zhu2021rich}.
Another approach is \textit{self-training}~\citep{xie2020_CVPR,du2021-self} where a \emph{smaller} teacher model is learned on the labeled data first which is then used to generate pseudo labels for a large but {relevant} unlabeled set.
A large student model is then trained on both labeled and pseudo labeled sets.
\textit{Label propagation}~\citep{iscen2019_CVPR} is another direction where unlabeled instances receive pseudo labels based on neighboring labeled instances in a similarity graph constructed based on the representations from a model trained on only labeled data.

Furthermore, there have been work on \textit{learning to teach}~\citep{fan2018_teach,raghu2021_teaching,pham2021_meta}, where the teacher is dynamically updated so as to provided more valuable pseudo labels based on the student loss function.
Such an interactive approach presents a challenging optimization problem and potentially opens up the door for borrowing techniques from reinforcement learning. In contrast, our work focuses on the setting where high-quality pretrained teacher model is fixed throughout the training. Here, we focus on a setting where updating the large teacher model is prohibitively costly or undesirable as such a model would potentially be used to distill many student models. Moreover, many large models like GPT-3 may only be available through API access, thus making it infeasible to update the teacher.

{\bf Using pretrained models.}~
One can use large scale pretrained class conditional generative models like BigGAN~\citep{brock2018_biggan} or VQ-VAE2~\citep{razavi2019_vqvae2} to generate more data for augmentation. Despite evidence ~\citep{webster_cvpr2019} that GANs are not memorizing training data, using them to simply augment the training dataset has limited utility when training ResNets~\citep{ravuri2019seeing,ravuri2019_cas}.
One possible reason might be the lack of diversity~\citep{arora_icml2017} in data generated by GANs, especially among high density regions~\citep{arora_iclr2018}.
In contrast, we use generative models to adaptively explore the local region of disagreement between teacher and student as opposed to blindly sampling from the generative model.
This way we circumvent the excessive reliance on samples from high density regions which often have low diversity.

Another line of work by \citet{chen2020_selfsupervised} combines unsupervised/self-supervised pretraining (on unlabeled data) with SimCLR-based approach~\citep{chen2020-simclr}, task-specific finetuning (on labeled data), and distillation (natural loss on labeled and distillation loss on unlabeled data). The setup considered in this work is very close to our work with two key differences: (1) We assume access to a very high-quality teacher model, which is potentially trained on a much larger labeled set, to provide pseudo labels; (2) We go beyond utilizing the given unlabeled dataset from the domain of interest, exploring the \emph{dynamic generation} of domain-specific unlabeled data by leveraging the representations learned by pretrained models. Additionally, our work aims to develop a theoretical framework to identify the utility of unlabeled data instances for student training, especially the unlabeled instances generated based on teacher learned representations.

{\bf {Using both pseudo labels and pretrained models}.}~
The idea of combining pretrained models to generate training instances along with pseudo-labelers has been previously considered in the name of the GAL framework~\citep{he2021gal}.
However, the GAL framework generates these new instances in an offline manner at the beginning of student training. 
In contrast, our proposed approach (cf.~\cref{fig:overview}) generates the new informative training instances in an online fashion, aiming at improving the student performance while reducing its training time. %

Recently, MATE-KD~\citep{rashid-etal-2021-mate} also considers a setup where a generator model is used to obtain new training instances based on the current student model performance (by looking at the divergence between the student and teacher predictions).
However, there are two key differences between our proposed TGT approach and the MATE-KD framework: 
First, their method updates the teacher so as to find adversarial examples for the students, but this can cause the generator to drift away from true data distribution.
Second, they perturb in input space itself and do not leverage the latent space of the teacher, which is the crux of our method. Further details are provided in~\cref{app:related-work}. 

Another work worth mentioning is KDGAN~\citep{kdgan} which leverages a GAN during distillation. However, it samples examples from a GAN without taking the student performance into account. We also note~\citep{heo_aaai, dong_iccip} that search for adversarial examples during distillation. However, their search also does not depend on student's performance, resulting in wasteful exploration of those regions of the input spaces where the student is already good. Further, unlike TGT, \citep{heo_aaai, dong_iccip} perform example search in the input space which is often inefficient due to the large ambient dimension of the input space.

Finally, %
data-free KD approaches perform knowledge distillation %
using only synthetically generated data~\citep{nayak2019zero,yoo2019knowledge,chen2019data}.
Unlike TGT, in this approach, the synthetic data distribution is updated at each epoch, but this causes the student model to lose the information over epochs and experience accuracy degradation~\citep{binici2022preventing}.
In this framework, \citet{micaelli2019zero} targeted generating samples that would cause maximum information gain to the student when learned, however, it also suffers from similar drawbacks as MATE-KD noted above.

\section{Teacher Guided Training}
\label{sec:tgt}
\vspace{-2mm}

We begin by formally introducing our setup in Section~\ref{sec:setup}. We then describe our proposed TGT framework in \cref{sec:tgt-approach} and present a theoretical analysis in \cref{sec:tgt-value-latent} and \cref{sec:tgt-grad-motive}. 

\subsection{Problem setup}
\label{sec:setup}
\vspace{-1mm}
In this paper, we focus on a multiclass classification task where given an instance $x \in \XCal$ the objective is to predict its true label $y \in \YCal := [K]$ out of $K$ potential classes.
Let $\DCal := \DCal_{X, Y}$ denote the underlying (joint) data distribution over the instance and label spaces for the task. Moreover, we use $\DCal_{X}$ and $\DCal_{Y | X = x}$ to denote the marginal distribution over the instance space $\XCal$ and the conditional label distribution for a given instance $x$, respectively. A classification model $f: \XCal \to \real^{K}$, with $f(x) = (f(x)_1,\ldots, f(x)_K)$ takes in an input instance $x$ and yields scores for each of the $K$ classes. Finally, we are given a (tractable) loss function $\ell : \real^K \times [K] \to \real$ which closely approximates model's misclassification error on an example $(x, y)$, e.g., softmax-based cross-entropy loss.

We assume access to $n$ i.i.d. labeled samples $\SCal^{\rm labeled}_n := \{(x_i, y_i)\}_{i \in [n]}$, generated from $\DCal$. Given $\SCal^{\rm labeled}_n$ and a collection of allowable models $\FC$, one typically learns a model with small misclassification error by solving the following {\em empirical risk minimization} (ERM) problem:
\begin{align}
\label{eq:erm-obj}
\widehat{f}_n  = \arg\min_{f \in \FC} { \frac{1}{n} \sum\nolimits_{i \in [n]}\ell(f(x_i), y_i)}.
\end{align}

Besides the standard classification setting introduced above, in our TGT setup, we further assume access to a high quality \emph{teacher model}, which has:
\vspace{-2mm}
\begin{itemize}[leftmargin=5mm, itemsep=2mm, partopsep=0pt,parsep=0pt]
    \item {\bf Teacher generator.}~A \emph{generative} component that captures $\DCal_{X}$ well,
    e.g., a transformer, VAE, or ALI-GAN. This usually consists of an encoder $\mathsf{Enc}: \XCal \to \mathbb{R}^d$ and a decoder $\mathsf{Dec}: \mathbb{R}^d \to \XCal$.
    \item {\bf Teacher labeler.}~A \emph{classification network}, denoted by $h: \XCal \to \real^K$, with good performance on the underlying classification task. In general, our framework allows for $h$ to be either a head on top of the teacher generator or an independent
    large teacher classification model.
\end{itemize}
\vspace{-2mm}
Given $\SCal^{\rm labeled}_n$ and such a teacher model, our objective is to learn a high-quality compact \emph{student} (classification) model in $\FC$, as assessed by its misclassification error on %
$\DCal$.

\subsection{Proposed approach}
\label{sec:tgt-approach}
\vspace{-1mm}

To train a student model $f \in \FC$,  %
we propose to minimize:
\begin{align}
    R^{\rm TGT}_{f}(\SCal^{\rm labeled}_n) :=&  
    \frac{1}{n}\sum_{i \in [n]} \ell(f(x_i), y_i) + \ell_{\drm}(f(x_i), h(x_i))
    + \frac{1}{m}\sum_{j \in [m]}\ell_{\drm}(f(\tilde{x}_j), h(\tilde{x}_j))
\end{align}
where $\ell_{\drm}: \real^K \times \real^K \to \real$ is a loss function that captures the mismatch between two models $f$ and $h$, %
and $\tilde{\SCal}_m = \{\tilde{x}_j\}_{j \in [m]}$ is introduced in subsequent passage.
The first term, {$\ell(f(x_i), y_i)$}, corresponds to standard ERM problem (cf.~\cref{eq:erm-obj}).
The subsequent terms, {$\ell_{\drm}(f(x_i), h(x_i))$ and $\ell_{\drm}(f(\tilde{x}_j), h(\tilde{x}_j))$}, do not make use of labels. 
In particular, the second term, {$\ell_{\drm}(f(x_i), h(x_i))$}, corresponds to the %
knowledge distillation %
~\citep{bucilua2006_compression, hinton2015_distilling} where the teacher model provides supervision for the student model. %

We introduce a novel third term, {$\ell_{\drm}(f(\tilde{x}_j), h(\tilde{x}_j))$}, where the data $\tilde{\SCal}_m = \{\tilde{x}_j\}$ is generated based on  $\SCal_n = \{x_i\}$.
Here, we want to generate additional informative samples which will help student learn faster, e.g., points where it disagrees with teacher but still lie on the data manifold.
In other words, we want to find $\tilde{x}$ as follows:
\begin{equation}
\begin{aligned}
    \tilde{x} = & \argmax_{x\in\XCal} \ell (h(x), f(x))~\text{such that} ~~p_{\DCal_{X}}(x) \geq \lambda
    \label{eq:new_desire}
\end{aligned}
\end{equation}
We propose two specific approaches to generate novel samples:
\vspace{-2mm}
\begin{enumerate}[leftmargin=6mm, itemsep=2mm, partopsep=0pt,parsep=0pt]
    \item \textbf{Isotropically perturb in latent space:}
    \begin{equation}
    \begin{aligned}
        \tilde{x} = \mathsf{Dec}(\mathsf{Enc}(x) + \nu) \quad \text{where}~\nu \sim \mathcal{N}(0, \sigma^2\mathbb{I}_d).
    \end{aligned}
    \label{eq:tgt-random}
    \end{equation}
    This can be regarded as a zero-order search in the latent space, which satisfies the constraint of remaining within the data manifold. 
    \item \textbf{Gradient-based exploration:}
    Run a few iterations of gradient ascent on~\cref{eq:new_desire} in order to find the example that diverges most with teacher.
    To enforce the constraint, we run the gradient ascent in the latent space of the teacher generator as opposed to performing gradient ascent in the instance space $\XCal$, %
    which might move the perturbed point out of the data manifold. For a high-quality teacher generator, the latent space should capture the data manifold well. 
    To implement this we need to backprop all the way through the student and teacher-labeler to the teacher-decoder, as shown in \cref{fig:overview}. Mathematically, it involves the following three operations:
    \begin{equation}
    \begin{aligned}
    z &:= \mathsf{Enc}(x); \qquad
    z &\leftarrow z + \eta \nabla_z \ell_{\drm}\left(f(\mathsf{Dec}(z)\right), h(\mathsf{Dec}(z))); \qquad
    \tilde{x} &:= \mathsf{Dec}(z).
    \end{aligned}
    \label{eq:tgt-gradient}
    \end{equation}
    This is akin to a first-order search in the latent space.
\end{enumerate}

\textbf{Extension to discrete data.}~
Note that perturbing an instance from a discrete domain, e.g., text data, is not as straightforward as in a continuous space. Typically, one has to resort to expensive combinatorial search or crude approximations to perform such perturbations~\citep{tan-etal-2020-morphin,zang-etal-2020-word,ren-etal-2019-generating}.
Interestingly, our approach in~\cref{eq:tgt-random} provides a simple alternative 
where one performs the perturbation in the latent space which is continuous.
On the other hand, in gradient based exploration, we assume that $\XCal$ is a differentiable space in order to calculate necessary quantities such as $\frac{\partial f(x)}{\partial x}$ in~\cref{eq:tgt-gradient}.
This assumption holds for various data such as images and point clouds but not for discrete data like text.
We can, however, circumvent this limitation by %
implementing weight sharing between the output softmax layer of the teacher's decoder $\mathsf{Dec}$ and the input embedding layer of the student $f$ (and also to teacher labeler $h$ when an independent model is used).
Now, one can bypass discrete space during the backward pass, similar to ideas behind VQ-VAE~\citep{hafner2019learning}.
Note that, during forward pass, we still need the discrete representation for decoding, e.g., using beam search.

Finally, we address the superficial resemblance between our approach and adversarial training.
In adversarial training, the goal is to learn a robust classifier, i.e.,~to increase margin.
Towards this, for any $x$, one wants to enforce model agreement in its local neighborhood $B_r(x)$, i.e., $f(x') = f(x), \forall x' \in B_r(x)$. One needs to carefully choose small enough neighborhood by restricting $r$,  %
so as to not cross the decision boundary.
In contrast, we are not looking for such max-margin training which has its own issues (cf.~\citep{nowak2021max}).
We simply want to encourage agreement between the teacher and student, i.e., $f(x') = h(x'),~\forall x'$.
Thus, we don't have any limitation on the size of the neighborhood to consider. As a result, we can explore much bigger regions as long as we remain on the data manifold.

\subsection{Value of generating samples via the latent space}
\label{sec:tgt-value-latent}
\vspace{-1mm}

In this section, we formally show how leveraging the latent space can help learning.
For this exposition, we assume $\XCal = \mathbb{R}^D$. %
Furthermore, for directly learning in the input space, we assume that our function class $\FC$ corresponds to all Lipschitz functions that map $\R^D$ to $\R^K$.
Then for any such function $f\in\FC$, there are existing results for generalization bound of the form~\citep{devroye2013probabilistic,mohri2018foundations}:
\begin{equation*}
    R_{\ell, f}(\DCal) \leq {R}_{\ell, f}(\SCal_n) + \underbrace{\RF_n (\GC_{\ell, \FC} )}_{\leq \OC(n^{-1/D})} + \OC\big(\sqrt{{\log(1/\delta)}/{n}}\big),
\end{equation*}
where $R_{\ell, f}(\DCal)$ is true population risk of the classifier, ${R}_{\ell, f}(\SCal_n)$ is empirical risk, and $\RF_n (\GC_{\ell, \FC})$ is the Rademacher complexity of the induced function class $\GC_{\ell, \FC}$, which is known in our case to be $\OC(n^{-1/D})$ (see~\cref{sec:t-bg} for more details). 
Any reduction in the Rademacher term would imply a smaller generalizing gap, which is our goal.

In our TGT framework, we assume availability of a teacher that is able to learn a good representation for the underlying data distribution. 
In particular, we assume that, for $x \in {\rm supp}(\DCal_X)$, we have 
\begin{align}
\label{eq:generator_reconstruction-main-body}
   \| \mathsf{Dec} \circ \mathsf{Enc} (x) - x \| \leq \epsilon,
\end{align}
i.e., for %
$x$, applying the decoder $\mathsf{Dec}$ on the latent representation of $x$, as produced by the encoder $\mathsf{Enc}$, leads to a point $\mathsf{Dec} \circ \mathsf{Enc} (x) \in \XCal$ that approximates $x$ with a small error.

This ability of teacher generator to model the data distribution using latent representation can be used to reduce the complexity of the function class needed.
Specifically, in TGT framework, we leverage the teacher decoder to restrict the function class to be a composition of the decoder function $\mathsf{Dec}$ and a learnable Lipschitz function operating on the latent space $\mathbb{R}^d$.
Since $d \ll D$, this leads to a function class with much lower complexity.
Next, we formally capture this idea for distillation with both the original samples $\SCal_n$ sampled from $\DCal_X$ as well as the novel samples $\tilde{\SCal}$ introduced by the teacher generator. In what follows, we only consider the distillation losses and ignore the first loss term (which depends on true labels). Our analysis can be easily extended to take the latter term into account (e.g., by using tools from \cite{foster_neurips2019}).

We start with the standard distillation in the following result.
\begin{theorem}
\label{thm:manifold-advantage-main-body}
Suppose a generative model with %
$\mathsf{Enc}$ and %
$\mathsf{Dec}$ satisfies the approximation guarantee in \cref{eq:generator_reconstruction-main-body} for $\DCal_X$. 
Let $\mathsf{Dec}$ and teacher labeler $h$ be {Lipschtiz} %
functions, and the distillation loss $\ell_{\drm}$ satisfies Assumption~\ref{assum:distill-decomposition}.
Then, with probability at least $1-\delta$, the following holds for any $f \in \FC$. 
\begin{align*}
R_{\ell, f}(\DCal)
\leq&  R^{h}_{\ell_{\drm}, f}(\SCal_n) + \underbrace{\RF_n(\GC^{h, \mathsf{Dec}}_{\ell_{\drm}, \FC})}_{\leq \OC(n^{-1/d})} + \OC\Big({\frac{\sqrt{\log({1}/{\delta})}}{\sqrt{n}}}\Big)
+ L \epsilon
+ \OC\big({\sqrt{K}}\mathbb{E}_{\DCal_X}\left[\|\DCal_{Y|X} - h(X)\|_2\right]\big).
\end{align*}
where $L$ is the effective Lipschitz constant of %
$\GC^{h, \mathsf{Dec}}_{\ell_{\drm}, \FC} =  \{z \mapsto \ell_{\drm}( f \circ \mathsf{Dec} (z), h\circ \mathsf{Dec}(z) ): f \in \FC \}$ --- an induced function class which maps $\real^d$ (latent space of generator) to $\real$.
\end{theorem}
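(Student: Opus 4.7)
The plan is to bound the gap between the true population risk $R_{\ell, f}(\DCal)$ and the empirical distillation risk $R^{h}_{\ell_{\drm}, f}(\SCal_n)$ by a chain of three conceptual steps: (i) swap the supervision signal from true labels $Y$ to the teacher scores $h(X)$, paying a teacher-quality term; (ii) swap the query point from $x$ to its reconstruction $\mathsf{Dec}\circ\mathsf{Enc}(x)$, paying a generator-approximation term; and (iii) apply standard Rademacher-based uniform convergence on the $d$-dimensional latent space. Each step introduces one of the terms that appears on the right-hand side of the claim.

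For step (i), I would take conditional expectation over $Y$ given $X$ and invoke Assumption~\ref{assum:distill-decomposition} on $\ell_{\drm}$, which I read as asserting that $\E_{Y\sim q}[\ell(u, Y)] = \ell_{\drm}(u, q)$ for any probability vector $q$ on $[K]$, and that $\ell_{\drm}$ is (say $O(1)$-)Lipschitz in its second argument w.r.t.\ $\|\cdot\|_2$. Then
\begin{equation*}
\E_{(X,Y)\sim\DCal}[\ell(f(X),Y)] = \E_{X}\bigl[\ell_{\drm}(f(X),\DCal_{Y|X})\bigr],
\end{equation*}
and replacing $\DCal_{Y|X}$ by $h(X)$ yields the additive term $\OC(\sqrt{K}\,\E_{\DCal_X}[\|\DCal_{Y|X}-h(X)\|_2])$ (with the $\sqrt{K}$ coming from converting any $\|\cdot\|_\infty$-Lipschitzness to $\|\cdot\|_2$ via $K$-dimensional equivalence, if needed). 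For step (ii), I would use the Lipschitzness of $\mathsf{Dec}$, $h$, and $f \in \FC$ together with the reconstruction bound \eqref{eq:generator_reconstruction-main-body} to write
\begin{equation*}
\bigl|\ell_{\drm}(f(x),h(x)) - \ell_{\drm}\bigl(f(\mathsf{Dec}\circ\mathsf{Enc}(x)),\,h(\mathsf{Dec}\circ\mathsf{Enc}(x))\bigr)\bigr| \leq L'\epsilon
\end{equation*}
for a composite Lipschitz constant $L'$ that depends on those of $\ell_{\drm}$, $f$, $h$, and $\mathsf{Dec}$; this is the source of the $L\epsilon$ term. For step (iii), every function $G(z) := \ell_{\drm}(f(\mathsf{Dec}(z)), h(\mathsf{Dec}(z)))$ lies in $\GC^{h,\mathsf{Dec}}_{\ell_{\drm},\FC}$, so on the pushforward distribution of $\mathsf{Enc}(X)$ the classical Rademacher bound (e.g.\ Theorem~3.3 of Mohri et al.) gives, w.p.\ $\geq 1-\delta$,
\begin{equation*}
\E_{X\sim\DCal_X}\bigl[G(\mathsf{Enc}(X))\bigr] \leq \frac{1}{n}\sum_{i\in[n]} G(\mathsf{Enc}(x_i)) + 2\RF_n(\GC^{h,\mathsf{Dec}}_{\ell_{\drm},\FC}) + \OC\!\left(\sqrt{\tfrac{\log(1/\delta)}{n}}\right).
\end{equation*}
Recycling step (ii) backwards converts the empirical latent-space risk into $R^{h}_{\ell_{\drm},f}(\SCal_n)$ at the cost of absorbing another $L'\epsilon$ term into the existing one. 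The $\OC(n^{-1/d})$ bound on $\RF_n(\GC^{h,\mathsf{Dec}}_{\ell_{\drm},\FC})$ then follows from the standard Dudley/covering-number argument for Lipschitz classes on a bounded subset of $\R^d$ (cf.\ the discussion of $\RF_n(\GC_{\ell,\FC})$ in \cref{sec:tgt-value-latent}).

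The main obstacle is the Lipschitz bookkeeping in step (ii)/(iii): one must argue that composing $\mathsf{Dec}:\R^d\to\XCal$ with the student $f$ (assumed Lipschitz as an element of $\FC$), the teacher labeler $h$, and the distillation loss $\ell_{\drm}$ yields a class whose members are uniformly Lipschitz on the (bounded image of the) latent space with a \emph{single} effective constant $L$, and that the resulting Rademacher complexity retains the clean $n^{-1/d}$ rate without picking up a hidden dependence on the ambient dimension $D$ through $\mathsf{Dec}$. A secondary subtlety is justifying Assumption~\ref{assum:distill-decomposition} for the specific distillation surrogate used (e.g.\ cross-entropy against a soft target) so that step (i) is truly lossless apart from the explicit teacher-quality term; this is where I expect the $\sqrt{K}$ factor to appear cleanly once the assumption is spelled out.
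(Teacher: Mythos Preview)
Your proposal is correct and uses the same three ingredients as the paper: the teacher-quality swap (the paper isolates this as Lemma~\ref{lem:exp-distill-loss}, using Cauchy--Schwarz with the boundedness of $\ell$ to produce the $\sqrt{K}$ factor), the reconstruction perturbation via Lipschitzness and~\eqref{eq:generator_reconstruction-main-body}, and the Rademacher bound for Lipschitz classes on $\R^d$. The only organizational difference is \emph{where} the $L\epsilon$ perturbation is inserted: you apply it directly to the population and empirical risks (hence paying it twice), whereas the paper first bounds $R^{h}_{\ell_{\drm},f}(\DCal_X) \le R^{h}_{\ell_{\drm},f}(\SCal_n) + \RF_{\SCal_n}(\GC^{h}_{\ell_{\drm},\FC})$ in the ambient space and then performs the add-and-subtract trick $g(x_i)=g(\tilde{x}_i)+(g(x_i)-g(\tilde{x}_i))$ \emph{inside} the Rademacher supremum, so that a single $L\epsilon$ emerges and the remaining Rademacher term is already over the latent class $\GC^{h,\mathsf{Dec}}_{\ell_{\drm},\FC}$. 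Both routes arrive at the same bound up to constants; the paper's version is slightly tighter and avoids the ``recycle step (ii) backwards'' maneuver.
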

Thus, we can reduce the Rademacher term from $O(n^{-1/D})$ to $O(n^{-1/d})$, which yields a significant reduction in sample complexity.
However, as the teacher model is not perfect, a penalty is incurred in terms of reconstruction and prediction error. See \cref{appen:manifold-proof} for the details.

Thus far, we have not leveraged the fact that we can also use the teacher to generate further samples.
Accounting for using samples $\tilde{\SCal}_n$  generated from teacher generator instead, one can obtain similar generalization gap for the distillation based on the teacher generated samples:
\begin{theorem}
\label{thm:tgt-generalization-main-body}
Let $\tilde{\SCal}_n = \{\tilde{x}_i\}_{i \in [n]}$ be $n$ i.i.d. samples generated by the the TGT framework, whose distribution be denoted by $\tilde{\DCal}_{X}$.
Further, let $\tilde{f}_n \in \FC$ denote the student model learned via distillation on $\tilde{\SCal}_n$, with $h$ as the teacher model and $\ell_{\drm}$ be the distillation loss satisfying Assumption~~\ref{assum:distill-decomposition}. %
Then, with probability at least $1 - \delta$, we have 
\begin{align*}
R_{\ell, f}(\DCal) \leq& 
R^{h}_{\ell_{\drm}, \tilde{f}_n}(\tilde{\SCal}_n) 
+ \underbrace{\tilde{\RF}_n(\GC^{h, \mathsf{Dec}}_{\ell_{\drm}, \FC})}_{\leq \OC(n^{-1/d})} 
+ \OC\left(\sqrt{\frac{\log(1/\delta)}{n}}\right)
+\WC(\DCal_{X}, \tilde{\DCal}_{X}) 
\\
&+\OC\big({\sqrt{K}}\mathbb{E}_{D_X}\left[\|D_{Y|X} - h(X)\|_2\right]\big),
\quad\text{where }
\GC^{h, \mathsf{Dec}}_{\ell_{\drm}, \FC}
\text{ is defined in Thm.~\ref{thm:manifold-advantage-main-body}}
\end{align*}
\end{theorem}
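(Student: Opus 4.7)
The plan is to mirror the proof of Theorem~\ref{thm:manifold-advantage-main-body}, with the key modification that since the training samples $\tilde{\SCal}_n$ now come from $\tilde{\DCal}_X$ rather than $\DCal_X$, we replace the reconstruction-error term $L\epsilon$ with a Wasserstein distance term that accounts for the distributional shift. First I would, exactly as in Theorem~\ref{thm:manifold-advantage-main-body}, apply Assumption~\ref{assum:distill-decomposition} to decompose
\begin{equation*}
R_{\ell, f}(\DCal) \leq R^{h}_{\ell_{\drm}, f}(\DCal_X) + \OC\big(\sqrt{K}\,\E_{\DCal_X}[\|\DCal_{Y|X} - h(X)\|_2]\big),
\end{equation*}
where $R^{h}_{\ell_{\drm}, f}(\DCal_X) := \E_{x \sim \DCal_X}[\ell_{\drm}(f(x), h(x))]$, isolating the gap due to the imperfect teacher labeler. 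Next I would bridge from $\DCal_X$ to $\tilde{\DCal}_X$ by invoking the Kantorovich--Rubinstein duality for the 1-Wasserstein distance: since $f$, $h$, $\mathsf{Dec}$, and $\ell_{\drm}$ are all Lipschitz by assumption, the mapping $x \mapsto \ell_{\drm}(f(x), h(x))$ is Lipschitz with some effective constant (absorbed into the normalization of $\WC$), giving $|R^{h}_{\ell_{\drm}, f}(\DCal_X) - R^{h}_{\ell_{\drm}, f}(\tilde{\DCal}_X)| \leq \WC(\DCal_X, \tilde{\DCal}_X)$.

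Then I would pass from $R^{h}_{\ell_{\drm}, f}(\tilde{\DCal}_X)$ to the empirical loss on $\tilde{\SCal}_n$ via a standard symmetrization-plus-concentration argument. The crucial observation is that each $\tilde{x}_j$ lies in the image of $\mathsf{Dec}$, so $\ell_{\drm}(f(\tilde{x}_j), h(\tilde{x}_j))$ can be written as an element of the induced function class $\GC^{h, \mathsf{Dec}}_{\ell_{\drm}, \FC}$ evaluated at a latent code $z_j \in \R^d$. Running the usual Rademacher bound on the $n$ i.i.d.\ latent samples underlying $\tilde{\SCal}_n$ and a McDiarmid step for high-probability confidence then yields
\begin{equation*}
R^{h}_{\ell_{\drm}, f}(\tilde{\DCal}_X) \leq R^{h}_{\ell_{\drm}, \tilde{f}_n}(\tilde{\SCal}_n) + 2\tilde{\RF}_n(\GC^{h, \mathsf{Dec}}_{\ell_{\drm}, \FC}) + \OC\big(\sqrt{\log(1/\delta)/n}\big),
\end{equation*}
and the Rademacher complexity of a class of Lipschitz functions on $\R^d$ is controlled by the standard covering-number bound as $\tilde{\RF}_n(\GC^{h, \mathsf{Dec}}_{\ell_{\drm}, \FC}) = \OC(n^{-1/d})$ (cf.~\cref{sec:t-bg}). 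Combining these three inequalities gives exactly the stated bound.

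The main obstacle will be making the Wasserstein bridging step rigorous under the sampling topology induced by TGT: both \cref{eq:tgt-random} and \cref{eq:tgt-gradient} push forward a latent distribution through $\mathsf{Dec}$, so $\tilde{\DCal}_X$ is supported on the decoded manifold, and one must check that the $1$-Wasserstein coupling in the ambient space $\XCal$ produces a cost small enough to be informative whenever the teacher's decoder faithfully covers the underlying data manifold. Unlike Theorem~\ref{thm:manifold-advantage-main-body}, there is no pointwise reconstruction term $L\epsilon$ because we evaluate on samples already lying in the decoder's image; the distributional shift is paid once, globally, via $\WC(\DCal_X, \tilde{\DCal}_X)$, and all Lipschitz-constant bookkeeping from $\ell_{\drm}$, $f$, and $h$ flows through the duality formula without any new technical step beyond those already used for Theorem~\ref{thm:manifold-advantage-main-body}.
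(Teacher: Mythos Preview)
Your proposal is correct and follows essentially the same approach as the paper: the paper's proof (Theorem~\ref{thm:tgt-generalization} in the appendix together with Remark~\ref{rem:thm-tgt-rem}) uses exactly the same three ingredients---the Kantorovich--Rubinstein Wasserstein bridge (Lemma~\ref{lem:distill-wasserstein}), the symmetrization/Rademacher-plus-concentration step, and the observation that $\tilde{x}_i = \mathsf{Dec}(z_i)$ so the Rademacher complexity reduces to that of $\GC^{h,\mathsf{Dec}}_{\ell_{\drm},\FC}$ on the $d$-dimensional latent space---then appends the teacher-labeler quality term via Lemma~\ref{lem:exp-distill-loss}. The only difference is the order in which you chain the inequalities (you apply the teacher-quality and Wasserstein steps first, the paper applies them last), which is immaterial.
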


Please see \cref{appen:tgt-proof} for a more precise statement and proof of Thm.~\ref{thm:tgt-generalization-main-body}.

\begin{remark}
Comparing with the generalization gap for standard distillation (cf.~Thm.~\ref{thm:manifold-advantage-main-body}), the generalization gap for TGT in Thm.~\ref{thm:tgt-generalization-main-body} does not have the reconstruction error related term $L\epsilon$. 
Thus, by working with the samples with exact latent representation, TGT avoids this reconstruction error penalty. %
On the other hand, generalization gap for TGT does have an additional term $\WC(D_{X}, \tilde{D}_{X})$, capturing the mistmatch between the original data distribution and the distribution of the samples used by TGT. 
However, this term becomes smaller as the teacher generator gets better at capturing the data. 
Note that generative models like WGAN explicitly minimize this term~\citep{arjovsky2017wasserstein}.
\end{remark}

\subsection{Motivation for gradient based exploration}
\label{sec:tgt-grad-motive}
\vspace{-1mm}

Results so far do not throw light on how to design optimal $\tilde{\DCal}_X$, i.e., the search mechanism in the latent space for our TGT framework.
In this regard, we look at the variance-based generalization bounds~\citep{Maurer2009_colt}.
These were previously utilized by \citet{menon21a} in the context of distillation. 
Applying this technique in our TGT approach, we would obtain a generalization bound of the form:
\begin{align}
\label{eq:var-generalization-bound-erm-main-body}
R^{h}_{\ell_{\drm}, f}(\DCal_X)\leq&  R^{h}_{\ell_{\drm}, f}(\tilde{\SCal}_n) + \text{\rm (II)} + \WC(\DCal_X, \tilde{\DCal}_X),
\end{align}
where, ${\rm (II)} = %
\OC\bigg(\sqrt{\frac{{\rm Var}_{\tilde{\DCal}_X}(\ell^{h}_{\drm, f}(\tilde{x}))\cdot\log(\frac{\MC(n)}{\delta})}{n}}  + \frac{\log(\frac{\MC(n)}{\delta})}{n}\bigg),
$
with $\ell^{h}_{\drm, f}(\tilde{x}) := \ell_{\drm}\big(f(\tilde{x}), h(\tilde{x})\big)$ and $\MC(n)$ depending on the covering number for the induced function class $\GC^h_{\ell_{\drm}, \FC}$ (cf.~\cref{eq:distill-induced-fc}).
Here, we note that by combining~\cref{eq:var-generalization-bound-erm-main-body} with \cref{lem:exp-distill-loss} translate the bound on $R^{h}_{\ell_{\drm}, f}(\DCal_X)$ to a bound on $R_{\ell, f}(\DCal)$ with an additional penalty term that depends on the quality of the teacher labeler $h$.

Note that \cref{eq:var-generalization-bound-erm-main-body} suggests a general approach to select the distribution \ $\tilde{\DCal}_X$ that generates the training samples $\tilde{\SCal}_n$. 
In order to ensure small generalization gap, we need to focus on two terms depending on $\tilde{\DCal}_X$: (1)
the variance term ${\rm Var}_{\tilde{\DCal}_X}(\ell_{\drm}(\tilde{x}))$; and (2)
the divergence term $\WC(\DCal_X, \tilde{\DCal}_X)$.
We note that finding a distribution that jointly minimizes both terms is a non-trivial task.
That said,  
in our sampling approach in \cref{eq:tgt-gradient},
we control for $\WC(\DCal_X, \tilde{\DCal}_X)$ by operating in the latent space of a good quality teacher generative model and 
minimize variance by finding points with high loss values through gradient ascent,
thereby striking a balance between the two objectives. We refer %
to~\cref{sec:is} for more details on the %
bound stated in~\cref{eq:var-generalization-bound-erm-main-body}.

\section{Experiments}
\label{sec:exp}
\vspace{-2mm}

We now conduct a comprehensive empirical study of our TGT framework in order to establish that TGT (i) leads to high accuracy in transferring knowledge in low data/long-tail regimes (\cref{sec:exp-img-lt}); (ii) effectively increases sample size (\cref{sec:exp-sample}); and (iii) has wide adaptability even to discrete data domains {such as text classification (\cref{sec:exp-text}) and retrieval (\cref{sec:exp-retrieval})}.

\begin{table*}[!t]
\vspace{-3mm}
\centering
    \hspace*{-3mm}%
    \scalebox{0.8}{ %
    \begin{small}
    \renewcommand{\arraystretch}{1.15}
    \begin{tabular}{@{}lllccc@{}}
    \toprule
    & \textbf{Approach} &
  \textbf{Architecture}  &
  \textbf{Balanced Accuracy} & \textbf{\# parameters} & \textbf{FLOPs}\\
    \toprule 
\multirow{11}{*}{\rotatebox[origin=c]{90}{ImageNet1K-LT}} 
& {LDAM-DRW*} \citep{cao2019learning} & ResNet-50 & 47.8 & 26 M & 4.1 B \\
& LWS \citep{Kang2020Decoupling}& ResNeXt-50 & 49.9 & 25 M & 4.2 B\\
& {Logit adjustment loss*} \citep{menon2021_logit}  & ResNet-50 & 50.4 & 26 M & 4.1 B\\
& {LDAM-DRS-RSG} \citep{Wang_2021_CVPR}& ResNeXt-50 & 51.8 & 25 M & 4.2 B\\
& {DRAGON + Bal'Loss} \citep{samuel2020longtail}& ResNet-10 & 46.5 & 5.4 M & 819 M \\
& {DRAGON + Bal'Loss} \citep{samuel2020longtail}& ResNet-50 & 53.5 & 26 M & 4.1 B \\
\cmidrule{2-6}
& \cellcolor{Gray} \emph{Teacher (labeler) model} & \cellcolor{Gray} EfficientNet-b3 & \cellcolor{Gray} 79.2 & \cellcolor{Gray} 12 M & \cellcolor{Gray} 1.8 B \\
& One-hot  & MobileNetV3-0.75 & 35.5 & 4.01 M & 156 M\\
& Distillation &  MobileNetV3-0.75 & 47.2 & 4.01 M & 156 M\\
& TGT (random)  &  MobileNetV3-0.75 & 53.2 & 4.01 M & 156 M\\
& TGT (gradient-based)   &  MobileNetV3-0.75 & 53.3 &  4.01 M & 156 M\\
\midrule \midrule
\multirow{8}{*}{\rotatebox[origin=c]{90}{SUN397-LT}}
& {LDAM-DRS-RSG} \citep{Wang_2021_CVPR}& ResNeXt-50 & 29.8 & 25 M & 4.2 B\\
& CAD-VAE \citep{schonfeld2019generalized} & ResNet-101 & 32.8 & 42 M & 7.6 B \\
& LWS \citep{Kang2020Decoupling}& ResNeXt-50 & 33.9 & 25 M & 4.2 B\\
& DRAGON + Bal'Loss \citep{samuel2020longtail} & ResNet-101 & 36.1 & 42 M & 7.6 B \\
\cmidrule{2-6}
& \cellcolor{Gray} \emph{Teacher (labeler) model}  & \cellcolor{Gray} EfficientNet-b3 & \cellcolor{Gray} 65.3 & \cellcolor{Gray} 12 M & \cellcolor{Gray} 1.8 B \\
& One-hot  & MobileNetV3-0.75 & 39.3 & 4.01 M & 156 M \\
& Distillation &  MobileNetV3-0.75 & 42.2 & 4.01 M & 156 M\\
& TGT (random)  &  MobileNetV3-0.75 & 44.3 & 4.01 M & 156 M \\
& TGT (gradient-based)  &  MobileNetV3-0.75 & 44.7 & 4.01 M & 156 M \\
\bottomrule 
    \end{tabular}
    \end{small}
    }
  \caption{Performance of TGT and various %
  baselines from the literature on long-tail image classification benchmarks (see \cref{appen:exp} for results on Places-LT~\citep{liu2019large}).
  Rows with * denote results taken from~\citet{menon2021_logit} and the rest were taken from~\citet{samuel2020longtail}. We report top-1 accuracy on balanced eval sets. We also state the number of model parameters and inference cost (in terms of FLOPs) for all the methods. Note that TGT leads to performance improvements over standard distillation on all three datasets, particularly for ImageNet-LT where the teacher generator models the task distribution well. TGT also often outperforms stated baselines that rely on much larger and expensive models.%
  }
    \label{tbl:long-tail-image}
    \vspace{-4mm}
\end{table*}

\subsection{Long-tail image classification}
\label{sec:exp-img-lt}
\vspace{-1mm}

\textbf{Setup.}~We evaluate TGT by training student models on three benchmark long-tail image classification datasets: ImageNet-LT~\citep{liu2019large}, SUN-LT~\citep{patterson2012sun}, Places-LT~\citep{liu2019large} 
We employ off-the-shelf teacher models, in particular BigBiGAN (ResNet-50)~\citep{donahue2019_bigbigan} and EfficientNet-B3~\citep{Xie_2020_CVPR} as the teacher generator and teacher labeler models, respectively. 
We utilize MobileNetV3~\citep{Howard_2019_ICCV} as compact student model architecture.
The teacher-labeler model is self-trained on JFT-300M~\citep{jft}, and then finetuned on the task-specific long-tail dataset. The teacher generator is trained on the unlabelled full version of ImageNet~\citep{ILSVRC15}.

\textbf{Results.}~The results\footnote{{We report results for Places-LT in \cref{appen:exp} due to space constraints.}} are reported in \cref{tbl:long-tail-image} compared with similar sized baselines (we ignored gigantic transformer models).
We see that TGT is able to effectively transfer knowledge acquired by the teacher during its training with the huge amount of data into a significantly smaller student model, which also has lower inference cost.
We see that TGT considerably improves the performance across the board over standard distillation, even on Sun-LT and Places-LT whose data distribution \emph{does not} exactly match to the distribution that the teacher's generator was trained with. %
Comparing TGT (random)~(cf.~\cref{eq:tgt-random}) and TGT (gradient-based)~(cf.~\cref{eq:tgt-gradient}) indicates that most of our win comes from utilizing the latent space, the form of search being of secondary importance. Thus, for all subsequent experiments we only consider TGT (random).

Here, we note the baselines stated from the literature in Table~\ref{tbl:long-tail-image} rely on specialized loss function and/or training procedures designed for the long-tail setting, whereas we do not leverage such techniques. %
One can pontentially combine the TGT framework with a long-tail specific loss function as opposed to employing the standard cross-entropy loss function as a way to further improve its performance. We leave this direction for future explorations.

\begin{table*}
    \vspace{-3mm}
    \centering
    \scalebox{0.86}{ %
    \begin{small}
    \renewcommand{\arraystretch}{1.15}
    \begin{tabular}{@{}lllcccccc@{}}
    \toprule
    \textbf{Method} & \textbf{Architecture} && \multicolumn{2}{c}{\textbf{Amazon-5}} & \textbf{IMDB} & \textbf{MNLI} & \multicolumn{2}{c}{\textbf{Yelp-5}} \\
    & && 2.5k & 3M & & & 2.5k & 650k \\
    \toprule
    UDA (Random Init)~\citep{xie2020unsupervised} & BERT base && 55.8 & - & - & - & 58.6 & -  \\
    UDA (Pretrained)~\citep{xie2020unsupervised} & BERT base && 62.9 & - & - & - & 67.9 & - \\
    Layer-wise Distillation~\citep{sun2020-mobilebert} & MobileBERT && - & - & 93.6 & 83.3 & - & -\\ 
    MATE-KD~\citep{rashid-etal-2021-mate} & DistilBERT && - & - & - & 85.8 & - & -\\
    \midrule
    \rowcolor{Gray}
    \emph{Teacher (labeler) model} & RoBERTa large && - & 67.6 & 96.2 & 90.6 & - & 72.0\\
    One-hot (Random Init) & DistilBERT && 44.3 & 53.6 & 50.0 & 63.0 & 50.4 & 58.1  \\
    One-hot (Pretrained) & DistilBERT && 55.9 & 66.3 & 93.6 & 84.1 & 59.1 & 67.3  \\
    Distillation (Random Init) & DistilBERT && 56.5 & 65.3 & 87.9 & 77.4 & 54.8 & 69.5 \\
    Distillation (Pretrained) & DistilBERT && 60.2 & 66.8 & 94.0 & 84.5 & 63.2 & 71.4  \\ 
    TGT (Random Init) & DistilBERT && 61.3 & 66.6 & 91.0 & 79.3 & 62.0 & 70.4 \\
    TGT (Pretrained) & DistilBERT && {\bf 64.6} & {\bf 67.1} & {\bf 95.4} & {\bf 86.0} & {\bf 68.6} & {\bf 71.7}\\
    \bottomrule
    \end{tabular}
    \end{small}}
    \caption{Performance of TGT and various baselines from the literature on four text classification benchmarks. The teacher labeler RoBERTa-large is pretrained on a large corpus and finetuned on the task-specific dataset. BART-base serves as the task-agnostic teacher generator model. For student model training, we show results for task-specific finetuning on both randomly initialized and pretrained DistilBERT models. Note that  TGT (Pretrained) --- TGT with a pretrained student model ---  outperforms all other methods across the board. Even more interstingly, on Amazon-5 and Yelp-5, TGT with randomly initialized student, i.e., TGT (Random Init), outperfroms the standard approach of finetuning a pretrained model with one-hot labels, i.e., One-hot (Pretrained). %
    }
    \label{tbl:nlp_result}
    \vspace{-4mm}
\end{table*}

\begin{wrapfigure}{r}{0.45\textwidth}
    \vspace{-15mm}
    \centering
    \includegraphics[width=0.8\linewidth]{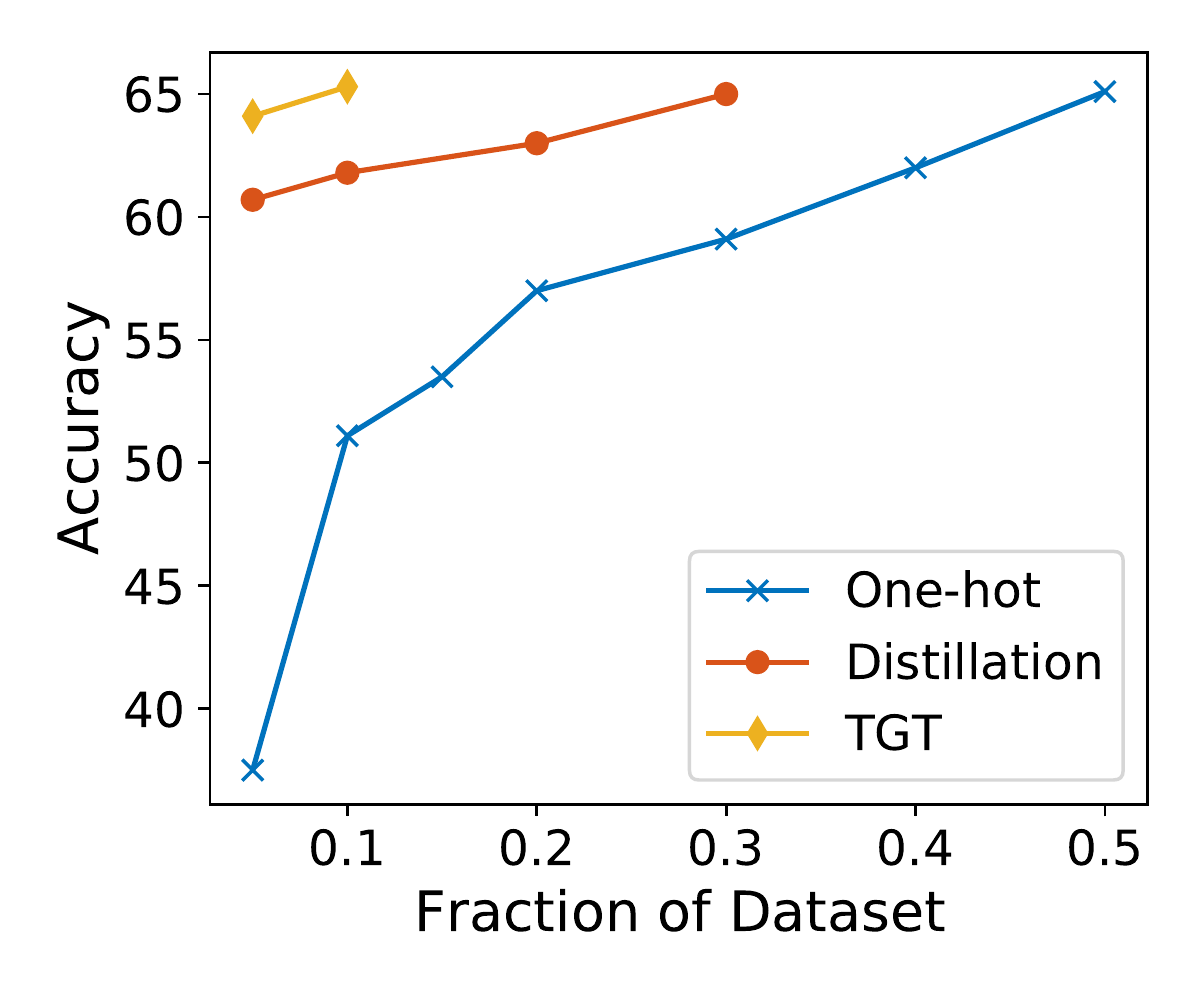}
    \vspace{-4mm}
    \caption{Performance comparison among normal training (one-hot), standard distillation (distillation), and TGT in simulated low-data regimes. We imitate a low-data regime via subsampling the ImageNet training set and evaluate the trained student models on the entire eval set. %
    We employ 450k training steps for normal training and standard distillation, and 112k training steps for TGT. 
    TGT outperforms other methods in less training steps, thus, effectively simulating an increase in the sample size.}
    \label{fig:tgt_sweep}
    \vspace{-11mm}
\end{wrapfigure}
\subsection{TGT in low-data regime}
\label{sec:exp-sample}
\vspace{-1mm}
To further showcase effectiveness of knowledge transfer via TGT, we simulate a low-data regime by varying the amount of available training data for ImageNet~\citep{ILSVRC15} and studying its impact on student's performance. For these experiments, we use the same model architectures as in \cref{sec:exp-img-lt}, but finetune the teacher labeler on the entire ImageNet.
We then compare the performance of the student trained via TGT, with the students trained via normal training (with one-hot labels) and standard distillation. %

The results are shown in \cref{fig:tgt_sweep}. Note that both TGT and standard distillation utilize additional training data more effectively than normal training, with TGT being the most efficient of the two. The curves show TGT is \textbf{equivalent to an increase in sample size by 4x}, compared to the normal training. This verifies that TGT generates informative training instances for the student.

\subsection{Text classification}
\label{sec:exp-text}
\vspace{-1mm}

\textbf{Setup.}~We evaluate the proposed TGT framework on four benchmark text classification datasets: Amazon-5~\citep{zhang2015character}, IMDB~\citep{maas-EtAl:2011:ACL-HLT2011}, MNLI~\citep{N18-1101}, and Yelp-5~\citep{zhang2015character}.
Following~\citet{xie2020_uda}, we also consider an extremely sub-sampled version of Amazon-5 and Yelp-5 datasets consisting of only 2500 labeled examples.
Similar to the image setting, we utilize off-the-shelf teacher models, in particular a BART-base~\citep{lewis2020bart} and RoBERTa-large~\citep{liu2019roberta} as the teacher generator and teacher labeler, respectively. Following standard practice~\citep{rashid-etal-2021-mate}, we employ a DistilBERT~\citep{sanh2019distilbert} model for the student model architecture. 
Both teacher networks are pretrained on a very large generic text corpus of size 160GB.
The teacher labeler model is finetuned on the corresponding dataset for each task.
The teacher generator is not specialized to any specific classification task. 

\textbf{Results.}~The results are reported in \cref{tbl:nlp_result} where we compare TGT with other data augmentation and distillation baselines.
We see that TGT considerably improves the performance and beats the state-of-the-art methods MATE-KD~\citep{rashid-etal-2021-mate} and UDA~\citep{xie2020_uda}. %
Also, note that by using TGT on a randomly initialized student, we can match the performance of normal finetuning (with one-hot labels) on a pretrained model on Amazon-5 and Yelp-5. We highlight that baselines such as MATE-KD~\citep{rashid-etal-2021-mate} always work with a pretrained student model. Thus, the aforementioned improvements realized by TGT with a randomly initialized student model demonstrates enormous saving in overall data and training time requirement as it eliminates the need for pretraining on a large corpus. This further establishes that TGT can enable a \emph{data-efficient knowledge transfer} from the teacher to the student.

\vspace{-1mm}
\subsection{Text retrieval}
\label{sec:exp-retrieval}
\vspace{-1mm}

\begin{table}
\centering
    \scalebox{0.99}{ %
        \begin{small}
        \renewcommand{\arraystretch}{1.15}
        \begin{tabular}{@{}lrr@{}}
        \toprule
        \textbf{Method} & recall@20 & recall@100 \\
        \toprule
        \rowcolor{Gray}
        \emph{Teacher (labeler) model} & 0.7957 &  0.8855\\
        one-hot & 0.6453 & 0.8198 \\
        distillation \emph{(standard)} & 0.7486 & 0.8608 \\ 
        uniform negatives distillation & 0.7536 & 0.8496 \\
        TGT distillation (\textbf{ours}) & \bf 0.7659 & \bf 0.8763 \\
        \bottomrule
        \end{tabular}
        \end{small}
    }
    \caption{Performance of TGT and various baselines on the NQ retrieval task~\citep{kwiatkowski2019natural}. The teacher labeler follows the setup of~\citep{ouguz2021domain} that pretrains RoBERTa-base on a large corpus and also PAQ~\citep{lewis2021paq} and then finetuned to NQ~\citep{kwiatkowski2019natural}. BART-base~\citep{lewis2020bart} is employed to serve as a task-agnostic generator. All student models follow the architecture of DistilBERT\citep{sanh2019distilbert}. TGT significantly outperforms standard training (One-hot) and teacher-label only distillation (Distillation). TGT closes the teacher-student gap by \emph{37\%} at @20, \emph{63\%} at @100) compared to the standard distillation.}
    \label{tbl:retrieval_result}
    \vspace{-6mm}
\end{table}

\textbf{Setup.}~Finally, we evaluate TGT on Natural Questions (NQ)~\citep{kwiatkowski2019natural} --- a text retrieval benchmark dataset.
The task is to find a matching passage given a question, from a large set of candidate passages (21M). 
We utilize the RoBERTa-Base %
dual-encoder model~\cite{ouguz2021domain} as our teacher labeler.
For teacher generator, we employ BART-base~\citep{lewis2020bart}. %
We utilize DistilBERT %
dual encoder model as our student architecture. %
We follow the standard retrieval distillation setup where the teacher labeler provides labels for all the within-batch question-to-passage pairs for the student to match. 

We consider three baselines: {\em One-hot} trains the student with the original one-hot training labels whereas 
{\em Distillation} utilizes the teacher labeler instead.
In {\em uniform negatives}, for a given question-to-passage pair in NQ, we uniformly sample and label additional 2 passages from the entire passage corpus (21M). 
TGT instead dynamically generates 2 \emph{confusing} passages for each question-passage pair with BART generator, infusing the isotropic perturbation as per \cref{eq:tgt-random}.

\textbf{Results.}~\cref{tbl:retrieval_result} compares TGT with other baselines. TGT significantly improves retrieval performance, closing the teacher-student gap by \emph{37\%} at recall@20 and \emph{63\%} at recall@100 compared to the standard distillation. Unlike TGT, uniformly sampled random passages only partially helped (slightly on recall@20 but degrades at @100 compared to the standard distillation). 
A plausible explanation is that the randomly sampled passages do not provide enough relevance to the matching pair since the output space is extremely large (21M). TGT instead generates informative passages that are close to the matching pair.

\vspace{-1mm}
\section{Conclusion and Future Directions}
\label{sec:conc}
\vspace{-3mm}
We have introduced a simple and formally justified distillation scheme (TGT) that adaptively generates samples with the aim of closing the divergence between student and teacher predictions. Our results show it to outperform, in aggregate, existing distillation approaches. Unlike alternative methods, it is also applicable to both continuous and discrete domains, as the results on image and text data show. 
TGT is orthogonal to other approaches that enable efficient inference such as quantization and pruning, and combining them is an interesting avenue for future work.
Another potential research direction is to employ TGT for multi-modal data which would require accommodating multiple generative models with their own latent space, raising both practical and theoretical challenges.

\bibliographystyle{plainnat}
\bibliography{tgt_ref}

\newpage
\appendix

\section{Further comparison with MATE-KD}
\label{app:related-work}

    MATE-KD~\citep{rashid-etal-2021-mate} alternative trains generator model and student model, with the hope of generating most adversarial examples for the students during the training. This can cause the generator to drift away from true data distribution. In contrast, we keep the pre-trained teacher-generator model fixed throughout the training process of the student. Our objective behind employing the generator model is to leverage the  domain knowledge it has already acquired during its pre-training. While we do want to generate ‘hard instances’ for the student, we also want those instances to be relevant for the underlying task. Thus, keeping the generator fixed introduces a regularization where the training instances the student encounters do not introduce domain mismatch.

    Keeping in mind the objective of producing new informative training instances that are in-domain, we introduce perturbation in the latent space realized  by the encoder of the teacher-generator model (see Figure~\ref{fig:overview}). This is different from directly perturbing an original training instance in the input space itself, as done by MATE-KD. As evident from our theoretical analysis and empirical evaluation, for a fixed teacher-generator model, employing perturbation in the latent space leads to more informative data augmentation and 
    enables good performance on both image and text domain.

\section{Background and notation}
\label{sec:t-bg}

For $a, b \in \real$, we use $a = \OC(b)$ to denote that there exits a constant $\gamma > 0$ such that $a \leq \gamma \cdot b$.

Given a collection of $n$ i.i.d. random variables $\UCal_n = \{u_1,\ldots, u_n\} \subset \UC$, generated from a distribution $\DCal_{U}$ and a function $\tau : \UC \to \real$, we define the {\em empirical mean} of $\{\tau(u_1),\ldots, \tau(u_n)\}$ as
\begin{align}
\label{eq:e-mean}
\mathbb{E}_{\UCal_n}[\tau(U)]:= \frac{1}{n}\sum_{i \in [n]}u_i.
\end{align}

For the underlying multiclass classification problem defined by the distribution $\DCal := \DCal_{X \times Y}$, %
we assume that the label set $\YCal$ with $K$ classes takes the form $[K]:= \{1,\ldots, K\}$. We use $\FC$ to denote the collection of potential classification models that the learning methods is allowed to select from, namely {\em function class} or {\em hypothesis set}:
\begin{align}
\FC \subseteq \{ \XCal \to \real^K\},
\end{align}
which is a subset of all functions that map elements of the instance space $\XCal$ to the elements of $\real^K$. 

Given a classification loss function $\ell : \real^{K} \times \YCal \to \real$ and a model $f : \XCal \to \real^K$ and a sample $\SCal^{\rm labeled}_n = \{(x_i, y_i)\}_{i \in [n]}$ generated from $\DCal$, we define the {\em empirical risk} for $f \in \FC$ as follows.
\begin{align}
\label{eq:er-f-l}
R_{\ell, f}(\SCal^{\rm labeled}_n) := \mathbb{E}_{\SCal^{\rm labeled}_n}[\ell\big(f(X)\big)] = \frac{1}{n}\sum\nolimits_{i \in [n]}\ell\big(f(x_i), y_i\big).
\end{align}
Further, we define the {\em population} risk for $f \in \FC$ associated with data distribution $\DCal$ as follows.
\begin{align}
\label{eq:pr-f-l}
R_{\ell, f}(\DCal) = \mathbb{E}_{X,Y \sim \DCal}[\ell(f(X), Y)].
\end{align}
Note that, when the loss function $\ell$ is clear from the context, we drop $\ell$ from the notation and simply use $R_{f}(\SCal^{\rm labeled}_n)$ and  $R_{f}(\DCal)$ to denote the the empirical and populations risks for $f$, respectively.

Given a function class $\FC$, the loss function $\ell$ induces the following function class.
\begin{align}
\label{eq:ell-induce-fc}
\GC_{\ell, \FC} = \big\{ (x, y) \mapsto \ell(f(x), y) : f \in \FC\big\}.
\end{align}
\begin{definition}[Rademacher complexity of $\GC_{\ell, \FC} $]
Now, given a sample $\SCal^{\rm labeled}_n = \{(x_i, y_i)\}_{i \in [n]} \sim \DCal^n$ and a vector $\bm{\sigma} = (\sigma_i,\ldots, \sigma_m) \in \{+1, -1\}$ with $n$ i.i.d. Bernoulli random variables, empirical Rademacher complexity $\RF_{\SCal}(\GC_{\ell, \FC} )$ and Rademacher complexity $\RF_n(\GC_{\ell, \FC} )$ are defined as 
\begin{align*}
\RF_{\SCal^{\rm labeled}_n}(\GC_{\ell, \FC} ) = \frac{1}{n}\mathbb{E}_{\bm{\sigma}}\left[ \sup_{g \in \GC_{\ell, \FC} } \sum_{i=1}^{n}\sigma_i g(x_i, y_i)\right]
\end{align*}
and
\begin{align}
\label{eq:induce_fc_rademacher}
\RF_{n}(\GC_{\ell, \FC} ) = \mathbb{E}_{\SCal \sim \DCal^n}\left[ \RF_{\SCal^{\rm labeled}_n}(\GC_{\ell, \FC} )\right] %
\end{align}
\end{definition}

Let $\SCal_n = \{x_i\}_{i \in [n]}$ be a set of $n$ {\em unlabeled} samples generated from $\DCal_{X}$. Then, given a teacher model $h : \XCal \to \real^K$ and a distillation loss $\ell_{\drm}: \real^K \times \real^K \to \real$, we define the {\em empirical (distillation) risk} for $f \in \FC$ to be
\begin{align}
\label{eq:distill-er}
    R^{h}_{\ell_{\drm}, f}(\SCal_n) := \mathbb{E}_{\SCal_n}\left[\ell_{\drm}(f(X), h(X))\right] = \frac{1}{n}\sum_{i \in [n]} \ell_{\drm}\big(f(x_i), h(x_i)\big).
\end{align}
Accordingly, the {\em population (distillation) risk} for $f \in \FC$ is defined as
\begin{align}
\label{eq:distill-pr}
R^{h}_{\ell_{\drm}, f}(\DCal) := \mathbb{E}_{X \sim \DCal_X}\left[\ell_{\drm}(f(X), h(X))\right].
\end{align}
Again, when $\ell_{\drm}$ is clear from the context, we simply use $R^{h}_{f}(\SCal_n)$ and $R^{h}_{f}(\DCal)$ to denote the empirical and population distillation risk for $f$, respectively. 

Note that, for a (student) function class $\FC$ and a teacher model $h$, $\ell_{\drm}$ produces an induced function class $\GC_{\ell_{\drm}, h}(\FC)$, defined as 
\begin{align}
\label{eq:distill-induced-fc}
\GC^{h}_{\ell_{\drm}, \FC} := \{x \mapsto  \ell_{\drm}(f(x), h(x)) : f \in \FC\}.
\end{align}

\begin{definition}[Rademacher complexity of $\GC^{h}_{\ell_{\drm}, \FC}$]
Given a sample $\SCal_n = \{x_i\}_{i \in [n]} \sim \DCal_{X}^n$ and a vector $\bm{\sigma} = (\sigma_i,\ldots, \sigma_m) \in \{+1, -1\}$ with $n$ i.i.d. Bernoulli randoms variable, empirical Rademacher complexity $\RF_{\SCal_n}\big(\GC^{h}_{\ell_{\drm}, \FC}\big)$ and Rademacher complexity $\RF_n\big(\GC^{h}_{\ell_{\drm}, \FC}\big)$ are defined as 
\begin{align}
\label{eq:distill-empirical-rc}
\RF_{\SCal_n}(\GC^{h}_{\ell_{\drm}, \FC}) = \frac{1}{n}\mathbb{E}_{\bm{\sigma}}\Big[ \sup_{g \in \GC^{h}_{\ell_{\drm}, \FC}} \sum_{i=1}^{n}\sigma_i g(x_i)\Big],
\end{align}
and
\begin{align}
\label{eq:distill-rc}
\RF_{n}(\GC^{h}_{\ell_{\drm}, \FC}) = \mathbb{E}_{\SCal \sim \DCal_{X}^n}\left[ \RF_{\SCal_n}(\GC^{h}_{\ell_{\drm}, \FC})\right]
\end{align}
\end{definition}
respectively.

\section{Deferred proofs from Section~\ref{sec:tgt}}
\label{appen:deferred}

\subsection{Proof of Theorem~\ref{thm:manifold-advantage-main-body}}
\label{appen:manifold-proof}

In this subsection, we present a general version of Theorem~\ref{thm:manifold-advantage-main-body}. Before that, we state the following relevant assumption on the distillation loss $\ell_{\drm}$.

\begin{assumption}
\label{assum:distill-decomposition}
Let $\ell:\real^K \times \YCal \to \real$ be a bounded loss function. For a teacher function $h: \XCal \to \real^K$, the distillation loss $\ell_{\drm}$ takes the form $$\ell_{\drm}(f(x),h(x)) = \sum_{y \in [K]} h(x)_y\cdot \ell(f(x), y).$$
\end{assumption}

\begin{remark} 
\label{rem:distill-ce}
Note that the cross-entropy loss $\ell_{\drm}(f(x), h(x)) = - \sum_{y}h(x)_y \cdot \log\big( f(x)_y\big)$, here, one of the most common choices for the distillation loss, indeed satisfies Assumption~\ref{assum:distill-decomposition}.\footnote{For the sake of brevity, we simply include the softmax-operation in the definition of $h$ and $f$, i.e., $h(x)$ and $f(x)$ are valid probability distributions over $\YCal = [K]$.}
\end{remark}

The following results is a general version of Theorem~\ref{thm:manifold-advantage-main-body} in the main body.

\begin{theorem}
\label{thm:manifold-advantage}
Let a generator with the encoder $\mathsf{Enc}$ and decoder $\mathsf{Dec}$ ensures the approximation guarantee in \cref{eq:generator_reconstruction-main-body} for $\DCal_X$. Let $\mathsf{Dec}$ and teacher labeler be {Lipschtiz} functions, $\FC$ be function class of Lipschitz functions, and the distillation loss $\ell_{\drm}$ be Lipschtiz. Then, with probability at least $1- \delta$, the following holds for any $f \in \FC$. 
\begin{align}
\label{eq:manifold-generalization}
& R^{h}_{\ell_{\drm}, f}(\DCal_X) 
\leq R^{h}_{\ell_{\drm}, f}(\SCal_n) + \OC\big(n^{-1/d}\big) +  L \epsilon +  \OC\Big(\sqrt{\frac{\log(1/\delta)}{n}}\Big),
\end{align}
where $L$ denotes the effective Lipschitz constant of the induced function class $\GC^h_{\ell_{\drm}, 
\FC}$~in \cref{eq:distill-induced-fc}. Additionally, if the distillation loss $\ell_{\drm}$ satisfies Assumption~\ref{assum:distill-decomposition} with a classification loss $\ell$, then \cref{eq:manifold-generalization} further implies the following.
\begin{align}
\label{eq:manifold-generalization-ii}
& R_{\ell, f}(\DCal)
\leq  R^{h}_{\ell_{\drm}, f}(\SCal_n) + \OC\big(n^{-1/d}\big) + L \epsilon + \OC\Big(\sqrt{\frac{\log(1/\delta)}{n}}\Big) + \OC\big({\sqrt{K}}\cdot\mathbb{E}_{\DCal_X}\left[\|\DCal_{Y|X} - h(X)\|_2\right]\big).
\end{align}
\end{theorem}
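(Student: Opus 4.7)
The plan is to reroute the entire analysis through the $d$-dimensional latent space, absorbing a reconstruction penalty $L\epsilon$, and then to apply the standard Rademacher--McDiarmid generalization machinery to the induced function class $\GC^{h,\mathsf{Dec}}_{\ell_{\drm},\FC}$ which maps $\R^d\to\R$ rather than $\R^D\to\R$; the second half of the theorem then follows by writing the classification risk as the distillation risk plus a calibration gap controlled by $\DCal_{Y|X}-h(X)$. Concretely, I would first set $\tilde{x}:=\mathsf{Dec}\circ\mathsf{Enc}(x)$, note that $\|x-\tilde{x}\|\le\epsilon$ on $\mathrm{supp}(\DCal_X)$ by \cref{eq:generator_reconstruction-main-body}, and chain the Lipschitz constants of $\mathsf{Dec}$, $h$, $f\in\FC$, and $\ell_{\drm}$ into a single effective constant $L$ so that $|\ell_{\drm}(f(x),h(x))-\ell_{\drm}(f(\tilde{x}),h(\tilde{x}))|\le L\epsilon$ pointwise and uniformly in $f\in\FC$. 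Taking expectations (resp.~empirical averages) then transfers both $R^h_{\ell_{\drm},f}(\DCal_X)$ and $R^h_{\ell_{\drm},f}(\SCal_n)$ onto the reconstructed points up to an additive $L\epsilon$.

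Next, for $z\in\R^d$ define $g_f(z):=\ell_{\drm}(f\circ\mathsf{Dec}(z),h\circ\mathsf{Dec}(z))\in\GC^{h,\mathsf{Dec}}_{\ell_{\drm},\FC}$, so that $\ell_{\drm}(f(\tilde{x}),h(\tilde{x}))=g_f(\mathsf{Enc}(x))$. Letting $z_i:=\mathsf{Enc}(x_i)$, a standard symmetrization plus bounded-differences argument (cf.~\citet{mohri2018foundations}) yields that with probability at least $1-\delta$, uniformly over $f\in\FC$,
\begin{equation*}
\mathbb{E}_Z\, g_f(Z)\ \le\ \tfrac{1}{n}\sum\nolimits_{i} g_f(z_i) + 2\,\RF_n(\GC^{h,\mathsf{Dec}}_{\ell_{\drm},\FC}) + \OC\bigl(\sqrt{\log(1/\delta)/n}\bigr).
\end{equation*}
Since $\mathsf{Dec}$, $\FC$, $h$, and $\ell_{\drm}$ are all Lipschitz, each $g_f$ is uniformly Lipschitz on the bounded set $\mathsf{Enc}(\mathrm{supp}(\DCal_X))\subset\R^d$; the classical Dudley-entropy / covering-number bound for uniformly Lipschitz classes on a bounded subset of $\R^d$ then produces $\RF_n(\GC^{h,\mathsf{Dec}}_{\ell_{\drm},\FC})=\OC(n^{-1/d})$~\citep{gyorfi2002distribution}. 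Stacking this with the reconstruction transfer from the previous paragraph yields \cref{eq:manifold-generalization}.

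For the upgrade to \cref{eq:manifold-generalization-ii}, Assumption~\ref{assum:distill-decomposition} gives the clean identity
\begin{equation*}
R_{\ell,f}(\DCal) - R^{h}_{\ell_{\drm},f}(\DCal_X)
\;=\; \mathbb{E}_X\!\left[\sum\nolimits_{y\in[K]}\bigl(\DCal_{Y|X}(y)-h(X)_y\bigr)\,\ell(f(X),y)\right],
\end{equation*}
after which Cauchy--Schwarz on the $K$-dimensional inner product, together with boundedness of $\ell$ (so that $\|\ell(f(X),\cdot)\|_2=\OC(\sqrt{K})$), delivers the advertised $\OC(\sqrt{K}\,\mathbb{E}_X\|\DCal_{Y|X}-h(X)\|_2)$ term; adding it to the bound from the preceding paragraph concludes.

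The one genuinely non-routine step is the latent-space Rademacher bound: I have to carefully track $L$ as an honest composition of the Lipschitz constants of $\mathsf{Dec}$, $\FC$, $h$, and $\ell_{\drm}$ (so that the $L\epsilon$ penalty really is uniform in $f\in\FC$), and I have to verify that $\GC^{h,\mathsf{Dec}}_{\ell_{\drm},\FC}$, restricted to the necessarily bounded image of $\mathsf{Enc}$, really has covering numbers matching the $\OC(n^{-1/d})$ nonparametric rate instead of $\OC(n^{-1/D})$. This latent-space dimension reduction is the entire point of the theorem and is exactly where the advantage over learning directly in $\XCal$ is harvested; everything else is bookkeeping on top of standard uniform-convergence arguments.
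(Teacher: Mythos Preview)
Your proposal is correct and follows essentially the same route as the paper: pass to the latent space via the $\epsilon$-reconstruction guarantee and the Lipschitz chain, apply Rademacher-based uniform convergence to the induced class $\GC^{h,\mathsf{Dec}}_{\ell_{\drm},\FC}$ on $\R^d$, invoke the $\OC(n^{-1/d})$ rate for Lipschitz classes, and finish with Cauchy--Schwarz under Assumption~\ref{assum:distill-decomposition}.

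The one mild organizational difference: you absorb the $L\epsilon$ penalty by transferring \emph{both} the population and empirical risks onto reconstructed points $\tilde{x}=\mathsf{Dec}\circ\mathsf{Enc}(x)$ and then run symmetrization entirely in the latent space, which costs $2L\epsilon$ in total. The paper instead performs symmetrization in the original space first, obtaining $R^{h}_{\ell_{\drm},f}(\DCal_X)\le R^{h}_{\ell_{\drm},f}(\SCal_n)+\RF_{\SCal_n}(\GC^{h}_{\ell_{\drm},\FC})$, and only then shows $\RF_{\SCal_n}(\GC^{h}_{\ell_{\drm},\FC})\le \RF_{\ECal_n}(\GC^{h,\mathsf{Dec}}_{\ell_{\drm},\FC})+L\epsilon$ by adding and subtracting $g(\tilde{x}_i)$ inside the supremum---so the reconstruction cost is paid only once. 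This is a cosmetic constant-factor difference; your version is arguably more transparent, while the paper's saves a factor of $2$ on the $L\epsilon$ term.
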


\begin{proof}
Note that
\begin{align}
\label{eq:manifold-proof-i}
R^{h}_{\ell_{\drm}, f}(\DCal_X) &=  \mathbb{E}_{\DCal_{X}}[\ell_{\drm}(f(X), h(X))] \nonumber \\
&\leq \mathbb{E}_{\SCal_n}[\ell_{\drm}(f(X), h(X))] + \sup_{f \in \FC} \Big|\mathbb{E}_{\SCal_n}[\ell_{\drm}(f(X), h(X))]  -  \mathbb{E}_{\DCal_X}[\ell_{\drm}(f(X), h(X))] \Big| \nonumber \\
&\overset{(i)}{\leq} \mathbb{E}_{\SCal_n}[\ell_{\drm}(f(X), h(X))] + \sup_{g \in \GC^h_{\ell_{\drm}, 
\FC}} \Big|\mathbb{E}_{\SCal_n}[g(X)]  -  \mathbb{E}_{\DCal_X}[g(X)] \Big| \nonumber \\
&\overset{(ii)}{\leq} \mathbb{E}_{\SCal_n}[\ell_{\drm}(f(X), h(X))] + \RF_{\SCal_n}(\GC^h_{\ell_{\drm}, 
\FC}),
\end{align}
where $(i)$ follows from the definition of $\GC^h_{\ell_{\drm}, 
\FC}$ in \cref{eq:distill-induced-fc} and $(i)$ follow from the standard symmetrization argument~\citep{devroye2013probabilistic, mohri2018foundations}. 
Next, we turn our focus to the empirical Rademacher complexity $\RF_{\SCal_n}(\GC^h_{\ell_{\drm}, 
\FC})$. 
Recall that $\SCal_n = \{x_1, x_2,\ldots, x_n\}$ contains $n$ i.i.d. samples generated from the distribution $\DCal_X$. We define another set of $n$ points 
\begin{align*}
\tilde{\SCal}_n = \{\tilde{x}_1 =  \mathsf{Dec}\circ\mathsf{Enc}(x_1),\ldots, \tilde{x}_n = \mathsf{Dec}\circ\mathsf{Enc}(x_n)\}. 
\end{align*}
It follows from our assumption on the quality of the generator (cf.~\cref{eq:generator_reconstruction-main-body}) that
\begin{align}
\label{eq:gen-assumption-i}
\|\mathsf{Dec}\circ\mathsf{Enc}(x_i) - x_i \| \leq \epsilon, \quad \forall i \in [n].
\end{align}
Note that
\begin{align}
&\RF_{\SCal_n}(\GC^h_{\ell_{\drm},  \FC}) = \frac{1}{n}\mathbb{E}_{\bm{\sigma}}\left\vert \sup_{g \in \GC^h_{\ell_{\drm},  \FC}}  \sum_{i} \sigma_i g(x_i) \right\vert, \nonumber
\end{align}
where $\bm{\sigma}$ denote a vector with $n$ i.i.d Bernoulli random variables.

\begin{align}
\RF_{\SCal_n}(\GC^h_{\ell_{\drm},  \FC}) &= \frac{1}{n}\mathbb{E}_{\bm{\sigma}}\left\vert \sup_{g \in \GC^h_{\ell_{\drm},  \FC}} \frac{1}{n} \sum_{i} \sigma_i  \big(g(\tilde{x}_i) - g(\tilde{x}_i) + g(x_i) \big) \right\vert \nonumber \\
& \leq \frac{1}{n}\mathbb{E}_{\bm{\sigma}}\left\vert \sup_{g \in \GC^h_{\ell_{\drm},  \FC}} \frac{1}{n} \sum_{i} \sigma_i g(\tilde{x}_i) \right\vert + \nonumber \\
&~\quad \frac{1}{n}\mathbb{E}_{\bm{\sigma}}\left\vert \sup_{g \in \GC^h_{\ell_{\drm},  \FC}}  \sum_{i} \sigma_i  \big(g(x_i) - g(\tilde{x}_i) \big) \right\vert \nonumber \\
& \leq \frac{1}{n}\mathbb{E}_{\bm{\sigma}}\left\vert \sup_{g \in \GC^h_{\ell_{\drm},  \FC}}  \sum_{i} \sigma_i g(\tilde{x}_i) \right\vert + \sup_{g \in \GC^h_{\ell_{\drm},  \FC}} \frac{1}{n} \sum_{i} \left\vert  g(x_i) - g(\tilde{x}_i) \right\vert  \nonumber \\
& \leq \frac{1}{n}\mathbb{E}_{\bm{\sigma}}\left\vert \sup_{g \in \GC^h_{\ell_{\drm},  \FC}}  \sum_{i} \sigma_i g(\tilde{x}_i) \right\vert +  \frac{1}{n} \sum_{i} L \cdot \| x_i - \tilde{x}_i \|  \nonumber \\
& \leq \frac{1}{n}\mathbb{E}_{\bm{\sigma}}\left\vert \sup_{g \in \GC^h_{\ell_{\drm},  \FC}}  \sum_{i} \sigma_i g(\tilde{x}_i) \right\vert +  L \epsilon \nonumber \\
& \leq \frac{1}{n}\mathbb{E}_{\bm{\sigma}}\left\vert \sup_{g \in \GC^h_{\ell_{\drm},  \FC}}  \sum_{i} \sigma_i g(\mathsf{Dec}(z_i)) \right\vert +  L \epsilon,
\label{eq:rad_gc}
\end{align}

where $z_i = \mathsf{Enc}(x_i)$, for $i \in [n]$. By definition of $\GC^h_{\ell_{\drm},  \FC}$, $g(\mathsf{Dec}(e)) = \ell_{\drm}(f(x), h(x))$ for some $f \in \FC$. Now, we can define a new function class from $\real^d$ to $\real$:
\begin{align}
\label{eq:tilde_gc}
\GC^{h, \mathsf{Dec}}_{\ell_{\drm}, \FC} =  \{z \mapsto \ell_{\drm}( f \circ \mathsf{Dec} (z), h\circ \mathsf{Dec}(z) ): f \in \FC \}.
\end{align}
Therefore, it follows from \cref{eq:rad_gc} and \cref{eq:tilde_gc} that
\begin{align}
\label{eq:manifold-rademacher-step-i}
\RF_{\SCal_n}(\GC^h_{\ell_{\drm},  \FC}) \leq 
\RF_{\ECal_n}(\GC^{h, \mathsf{Dec}}_{\ell_{\drm}, \FC})
 + L \epsilon, 
\end{align}
where $\ECal_n = \{\mathsf{Enc}(x_1),\ldots, \mathsf{Enc}(x_n)\} \subset \real^d$. It follows from the standard concentration results for {\em empirical} Rademacher complexity around Rademacher complexity that with probability at least $1 - \delta$, 
\begin{align}
\label{eq:manifold-rademarcher-concentration}
\RF_{\ECal_n}(\GC^{h, \mathsf{Dec}}_{\ell_{\drm}, \FC}) \leq \RF_{n}(\GC^{h, \mathsf{Dec}}_{\ell_{\drm}, \FC}) +  \OC\Big(\sqrt{{\log\Big(\frac{1}{\delta}\Big)}\cdot\frac{1}{n}}\Big).
\end{align}

Since $f \in \FC$, $h$ and $\mathsf{Dec}$ are Lipschitz functions, $\GC^{h, \mathsf{Dec}}_{\ell_{\drm}, \FC}$ is collection of Lipschitz functions from $\real^d$ to $\real$. Thus, it follows from the standard results~\citep[Theorem 4.3]{gottlieb2016adaptive}  that
\begin{align}
\label{eq:manifold-rademacher-step-ii}
\RF_{n}(\GC^{h, \mathsf{Dec}}_{\ell_{\drm}, \FC}) \leq \OC\big({n^{-\frac{1}{d}}}\big).
\end{align}
Now, \cref{eq:manifold-generalization} follow from \cref{eq:manifold-proof-i}, \cref{eq:manifold-rademacher-step-i}, \cref{eq:manifold-rademarcher-concentration}, and \cref{eq:manifold-rademacher-step-ii}. Finally, \cref{eq:manifold-generalization-ii} follows by combining Lemma~\ref{lem:exp-distill-loss} with \cref{eq:manifold-generalization}.
\end{proof}

\subsection{Proof of Theorem~\ref{thm:tgt-generalization-main-body}}
\label{appen:tgt-proof}

Here, we present the following result, which along with \cref{rem:thm-tgt-rem} implies Theorem~\ref{thm:tgt-generalization-main-body} stated in the main body.

\begin{theorem}
\label{thm:tgt-generalization}
Let $\tilde{\SCal}_n = \{\tilde{x}_i\}_{i \in [n]}$ be $n$ i.i.d. samples generated from a distribution $\tilde{\DCal}_{X}$. Further, let $\tilde{f}_n \in \FC$ denote the student model learned via distillation on $\tilde{\SCal}_n$, with $h$ and $\ell_{\drm}$ as the teacher model and distillation loss, respectively. Then, with probability at least $1 - \delta$, we have 
\begin{align}
\label{eq:augmentation-generalization}
& R^{h}_{\ell_{\drm}, \tilde{f}_n}({\DCal}_X) \leq R^{h}_{\ell_{\drm}, \tilde{f}_n}(\tilde{\SCal}_n) +  \WC(\DCal_{X}, \tilde{\DCal}_{X}) +  \tilde{\RF}_{n}(\GC^h_{\ell_{\drm},  \FC}) +
\OC\Big(\sqrt{{\log\Big(\frac{1}{\delta}\Big)}\cdot\frac{1}{n}}\Big),
\end{align}
where $\tilde{\RF}_{n}(\GC^h_{\ell_{\drm},  \FC}) = \mathbb{E}_{\tilde{\SCal} \sim \tilde{\DCal}^n}\left[ \RF_{\tilde{\SCal}_n}(\GC^h_{\ell_{\drm},  \FC})\right]$ denote that Rademacher complexity of the induced function class $\GC^h_{\ell_{\drm},  \FC}$, defined in \cref{eq:distill-induced-fc}.
If $\tilde{\SCal}$ is constructed with the TGT framework based on a generator with the encoder $\mathsf{Enc}$ and decoder $\mathsf{Dec}$, then \cref{eq:augmentation-generalization} further specialized to 
\begin{align}
\label{eq:tgt-augmentation-generalization}  
& R^{h}_{\ell_{\drm}, \tilde{f}_n}({\DCal}_X) 
\leq R^{h}_{\ell_{\drm}, \tilde{f}_n}(\tilde{\SCal}_n) + \WC(\DCal_{X}, \tilde{\DCal}_{X}) +  \tilde{\RF}_{n}(\GC^{h, \mathsf{Dec}}_{\ell_{\drm}, \FC}) +
\OC\Big(\sqrt{{\log\Big(\frac{1}{\delta}\Big)}\cdot\frac{1}{n}}\Big),
\end{align}
where $\GC^{h, \mathsf{Dec}}_{\ell_{\drm}, \FC}$ defines the following induced function class from $\real^d$ (i.e., the latent space of the generator) to $\real$.
\begin{align}
\label{eq:tilde_gc_tgt}
\GC^{h, \mathsf{Dec}}_{\ell_{\drm}, \FC} =  \{z \mapsto \ell_{\drm}( f \circ \mathsf{Dec} (z), h\circ \mathsf{Dec}(z) ): f \in \FC \}.
\end{align}
\end{theorem}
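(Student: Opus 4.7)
The plan is to mirror the proof of Theorem~\ref{thm:manifold-advantage-main-body}, but replace the reconstruction-error step with a Wasserstein-distance step, since TGT samples \emph{already} live on the decoder's image and thus carry no $L\epsilon$ penalty. I would start by invoking the (unstated here but referenced) Lemma~\ref{lem:exp-distill-loss} to pass from the population classification risk $R_{\ell,\tilde f_n}(\DCal)$ to the population distillation risk $R^{h}_{\ell_{\drm},\tilde f_n}(\DCal_X)$, absorbing the teacher-quality term $\OC(\sqrt{K}\,\mathbb{E}_{\DCal_X}[\|\DCal_{Y|X}-h(X)\|_2])$. This isolates the task as bounding
\begin{equation*}
R^{h}_{\ell_{\drm},\tilde f_n}(\DCal_X) - R^{h}_{\ell_{\drm},\tilde f_n}(\tilde \SCal_n),
\end{equation*}
which I split via a triangle-style decomposition as $\big[R^{h}_{\ell_{\drm},\tilde f_n}(\DCal_X) - R^{h}_{\ell_{\drm},\tilde f_n}(\tilde\DCal_X)\big] + \big[R^{h}_{\ell_{\drm},\tilde f_n}(\tilde\DCal_X) - R^{h}_{\ell_{\drm},\tilde f_n}(\tilde \SCal_n)\big]$.

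For the first bracket I would appeal to the Kantorovich--Rubinstein dual characterization of $\WC$: since $x\mapsto \ell_{\drm}(f(x),h(x))$ is Lipschitz (as a composition of Lipschitz pieces under the hypotheses of Theorem~\ref{thm:manifold-advantage-main-body}), the difference of expectations under $\DCal_X$ and $\tilde\DCal_X$ is upper-bounded (up to the Lipschitz constant, which I would absorb into constants) by $\WC(\DCal_X,\tilde\DCal_X)$, yielding the Wasserstein term in the statement. For the second bracket, apply a uniform deviation argument over $\FC$: by standard symmetrization plus McDiarmid, with probability $\ge 1-\delta$,
\begin{equation*}
\sup_{f\in\FC}\big|R^{h}_{\ell_{\drm},f}(\tilde\DCal_X) - R^{h}_{\ell_{\drm},f}(\tilde\SCal_n)\big| \le 2\,\tilde\RF_n(\GC^{h}_{\ell_{\drm},\FC}) + \OC\big(\sqrt{\log(1/\delta)/n}\big).
\end{equation*}

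The one step where TGT sheds the $L\epsilon$ penalty of Theorem~\ref{thm:manifold-advantage-main-body} is the reduction of $\tilde\RF_n(\GC^{h}_{\ell_{\drm},\FC})$ to $\tilde\RF_n(\GC^{h,\mathsf{Dec}}_{\ell_{\drm},\FC})$: because each $\tilde x_i$ is by construction $\mathsf{Dec}(z_i)$ for a latent $z_i\in\mathbb{R}^d$, the supremum inside the empirical Rademacher average can be re-indexed \emph{exactly} (no approximation needed) over the latent-space function class defined in \eqref{eq:tilde_gc_tgt}. This gives the claimed $\tilde\RF_n(\GC^{h,\mathsf{Dec}}_{\ell_{\drm},\FC})$, which by the Lipschitz-class bound of \citet[Thm.~4.3]{gottlieb2016adaptive} is $\OC(n^{-1/d})$. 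Collecting the pieces produces the stated inequality.

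The main obstacle I anticipate is bookkeeping around the first bracket: one needs the integrand $x\mapsto \ell_{\drm}(f(x),h(x))$ to be Lipschitz \emph{uniformly} over $f\in\FC$ with a controlled constant so that the Wasserstein bound is valid for $\tilde f_n$, which is itself a data-dependent choice. This is handled by assuming (as Theorem~\ref{thm:manifold-advantage-main-body} already does) a uniform Lipschitz constant on $\FC$, $h$, and $\ell_{\drm}$; the resulting multiplicative constant is absorbed into the $\WC$ term. A secondary subtlety is that, when TGT uses gradient ascent (cf.~\eqref{eq:tgt-gradient}), $\tilde\DCal_X$ depends on the current student, so the ``i.i.d.''\ assumption on $\tilde\SCal_n$ is taken as a modeling idealization; under this idealization the decomposition above goes through without change.
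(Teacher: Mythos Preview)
Your proposal is correct and follows essentially the same route as the paper: decompose into a distribution-shift piece handled by Kantorovich--Rubinstein (the paper's Lemma~\ref{lem:distill-wasserstein}) and a concentration piece handled by symmetrization plus the empirical-to-expected Rademacher bound, then for the TGT-specific part re-index the empirical Rademacher supremum exactly through $\tilde x_i=\mathsf{Dec}(z_i)$ to land in $\GC^{h,\mathsf{Dec}}_{\ell_{\drm},\FC}$. One small over-reach: the theorem as stated bounds $R^{h}_{\ell_{\drm},\tilde f_n}(\DCal_X)$ directly, so your opening invocation of Lemma~\ref{lem:exp-distill-loss} is unnecessary here---the paper defers that step to the subsequent Remark~\ref{rem:thm-tgt-rem}.
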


\begin{proof}
Note that
\begin{align}
\label{eq:tgt-gen-proof-i}
R^{h}_{\ell_{\drm}, \tilde{f}_n}(\tilde{\DCal}_X) &= \mathbb{E}_{\tilde{\DCal}_{X}}[\ell_{\drm}(\tilde{f}_n(X), h(X))]  \nonumber \\
&\leq \mathbb{E}_{\tilde{\SCal}_n}[\ell_{\drm}(\tilde{f}_n(X), h(X))] + \sup_{f \in \FC} \big|\mathbb{E}_{\tilde{\SCal}_n}[\ell_{\drm}(f(X), h(X))]  -  \mathbb{E}_{\tilde{\DCal}_{X}}[\ell_{\drm}(f(X), h(X))] \big| \nonumber \\
&{\leq} \mathbb{E}_{\tilde{\SCal}_n}[\ell_{\drm}(\tilde{f}_n(X), h(X))] + \sup_{g \in \GC^h_{\ell_{\drm}, 
\FC}} \Big|\mathbb{E}_{\tilde{\SCal}_n}[g(X)]  -  \mathbb{E}_{\tilde{\DCal}_X}[g(X)] \Big| \nonumber \\
&{\leq}  \mathbb{E}_{\tilde{\SCal}_n}[\ell_{\drm}(\tilde{f}_n(X), h(X))] + {\RF}_{\tilde{\SCal}_n}(\GC^h_{\ell_{\drm},  \FC}),
\end{align}
where the last two inequality follows from the definition of $\GC^h_{\ell_{\drm},  \FC}$ (cf.~\cref{eq:distill-induced-fc}) and the standard symmetrization argument~~\citep{devroye2013probabilistic, mohri2018foundations}, respectively.

Now, the standard concentration results for empirical Rademacher complexity implies that, with probability at least $1 - \delta$, we have the following.
\begin{align}
\label{eq:tgt-gen-rademarcher-concentration}
{\RF}_{\tilde{\SCal}_n}(\GC^h_{\ell_{\drm},  \FC}) & \leq \mathbb{E}_{\tilde{\SCal} \sim \tilde{\DCal}^n}\left[ \RF_{\tilde{\SCal}_n}(\GC^h_{\ell_{\drm},  \FC})\right] + \OC\Big(\sqrt{{\log\Big(\frac{1}{\delta}\Big)}\cdot\frac{1}{n}}\Big) \\
&=
\tilde{\RF}_{n}(\GC^h_{\ell_{\drm},  \FC}) +  \OC\Big(\sqrt{{\log\Big(\frac{1}{\delta}\Big)}\cdot\frac{1}{n}}\Big).
\end{align}
It follows from Lemma~\ref{lem:distill-wasserstein} that 
\begin{align}
\label{eq:tgt-gen-wasserstein-i}
R^{h}_{\ell_{\drm}, \tilde{f}_n}({\DCal}_X) \leq R^{h}_{\ell_{\drm}, \tilde{f}_n}(\tilde{\DCal}_X) + \WC(\DCal_{X}, \tilde{\DCal}_{X})
\end{align}
Now the first part of Theorem~\ref{thm:tgt-generalization}, as stated in \cref{eq:augmentation-generalization}, follows by combining \cref{eq:tgt-gen-proof-i}, \cref{eq:tgt-gen-rademarcher-concentration}, and \cref{eq:tgt-gen-wasserstein-i}.

We now focus on establishing \cref{eq:tgt-augmentation-generalization}. Note that, for a sample $\tilde{\SCal}_n = \{\tilde{x}_1,\ldots,\tilde{x}_n\}$ generated by the TGT framework, there exists $\{z_1,\ldots, z_n\} \subset \real^d$ such that
\begin{align}
\label{eq:tgt-x-z}
\tilde{x}_i = \mathsf{Dec}(z_i),~\forall i \in [n].
\end{align}
Thus, 
\begin{align}
\label{eq:tgt-manifold-rademacher-i}
{\RF}_{\tilde{\SCal}_n}(\GC^h_{\ell_{\drm},  \FC}) &= \frac{1}{n}\mathbb{E}_{\bm{\sigma}}\left\vert \sup_{g \in \GC^h_{\ell_{\drm},  \FC}}  \sum_{i} \sigma_i g(\tilde{x}_i) \right\vert \nonumber \\
&\overset{(i)}{=} \frac{1}{n}\mathbb{E}_{\bm{\sigma}}\left\vert \sup_{g \in \GC^h_{\ell_{\drm},  \FC}}  \sum_{i} \sigma_i g(\mathsf{Dec}(z_i)) \right\vert \nonumber \\
& \leq \frac{1}{n}\mathbb{E}_{\bm{\sigma}}\left\vert \sup_{g' \in \GC^{h, \mathsf{Dec}}_{\ell_{\drm}, \FC}}  \sum_{i} \sigma_i g'(z_i)\right\vert \nonumber \\
&= {\RF}_{\tilde{\SCal}_n}(\GC^{h, \mathsf{Dec}}_{\ell_{\drm}, \FC}),
\end{align}
where $(i)$ employs \cref{eq:tgt-x-z}. Thus, combining \cref{eq:tgt-gen-proof-i} and \cref{eq:tgt-manifold-rademacher-i} gives us that
\begin{align}
\label{eq:tgt-manifold-rademacher-ii}
R^{h}_{\ell_{\drm}, \tilde{f}_n}(\tilde{\DCal}_X) \leq \mathbb{E}_{\tilde{\SCal}_n}[\ell_{\drm}(\tilde{f}_n(X), h(X))]  + {\RF}_{\tilde{\SCal}_n}(\GC^{h, \mathsf{Dec}}_{\ell_{\drm}, \FC}).
\end{align}
Now, similar to the proof of \cref{eq:augmentation-generalization}, we can invoke Lemma~\ref{lem:distill-wasserstein} and the concentration result for empirical Rademacher complexity to obtain  the desired result in \cref{eq:tgt-augmentation-generalization} from \cref{eq:tgt-manifold-rademacher-ii}.
\end{proof}

\begin{remark}
\label{rem:thm-tgt-rem}
Note that, if the distillation loss $\ell_{\drm}$ satisfies Assumption~\ref{assum:distill-decomposition} with a loss function $\ell$, then, one can combine Theorem~\ref{thm:tgt-generalization} and Lemma~\ref{lem:exp-distill-loss} to readily obtain bounds on $R_{\ell, \tilde{f}_n}(\DCal)$ with an additional term $$\OC\big({\sqrt{K}}\cdot\mathbb{E}_{\DCal_X}\left[\|\DCal_{Y|X} - h(X)\|_2\right]\big).$$ This term captures the quality of the teacher labeler $h$.
\end{remark}

\subsection{Weighted ERM: An alternative training procedure for TGT}
\label{sec:is}

Note that given the samples $\tilde{\SCal}_n = \{\tilde{x}_i\}_{i \in [n]}$ generated from $\tilde{\DCal}_{X}$ and a teacher labeler $h$, we minimize the following empirical risk for student training: %
\begin{align}
\label{eq:erm-dist}
R^{h}_{\ell_{\drm}, f}(\tilde{\SCal}_n) = \frac{1}{n}\sum_{i \in [n]} \ell_{\drm}\big(f(\tilde{x}_i), h(\tilde{x}_i)\big).
\end{align}

However, as we notice in Theorem~\ref{thm:tgt-generalization}, this leads to an additional $\WC(\DCal_X, \tilde{\DCal}_{X})$ penalty term in the generalization bound. One standard approach to address this issue is to consider the following {\em weighted} empirical risk.
\begin{align}
\label{eq:is-er}
R^{h, {\rm IS}}_{\ell_{\drm}, f}(\tilde{\SCal}_n) = \frac{1}{n}\sum_{i \in [n]} \ell_{\drm}\big(f(\tilde{x}_i), h(\tilde{x}_i)\big)\cdot \frac{p_{\DCal_X}(\tilde{x}_i)}{p_{\tilde{\DCal}_X}(\tilde{x}_i)},
\end{align}
where $p_{\DCal_X}$ and $p_{\tilde{\DCal}_X}$ denote the probability density function (pdf) for $\DCal_X$ and $\tilde{\DCal}_X$.\footnote{Note that the formulation assumes that $\DCal_X \ll \tilde{\DCal}_X$, i.e., $\DCal_X$ is absolutely continuous w.r.t. $\tilde{\DCal}_X$. 
Also, one can replace the pdf's with probability mass functions if $\DCal_{X}$ and $\tilde{\DCal}_{X}$ are discrete distributions.} Accordingly, we define a new induced function class related to the {weighted empirical risk}:
\begin{align}
\label{eq:is-induced-fc}
\GC^{h {\rm IS}}_{\ell_{\drm}, \FC} = \{x \mapsto  \ell_{\drm}\big(f(\tilde{x}_i), h(\tilde{x})\big)\cdot \frac{p_{\DCal_X}(\tilde{x})}{p_{\tilde{\DCal}_X}(\tilde{x})} : f \in \FC\}  
\end{align}

Importantly, we have 
\begin{align}
R^{h, {\rm IS}}_{\ell_{\drm}, f}(\tilde{\DCal}_X) = \mathbb{E}_{\tilde{\DCal}_X}\left[{R^{h, {\rm IS}}_{\ell_{\drm}, f}(\tilde{\SCal}_n)}\right] = R^{h}_{\ell_{\drm}, f}(\DCal_X)  
\end{align}
Thus, following the analysis utilized in Theorem~\ref{thm:tgt-generalization}, one can obtain a high probability generalization of the form.
\begin{align}
\label{eq:augmentation-is-generalization}
& R^{h}_{\ell_{\drm}, f}(\DCal_X) \leq R^{h, {\rm IS}}_{\ell_{\drm}, f}(\tilde{\SCal}_n) +  \tilde{\RF}_{n}(\GC^{h,  {\rm IS}}_{\ell_{\drm}, \FC}) +  \OC\Big(\sqrt{{\log\Big(\frac{1}{\delta}\Big)}\cdot\frac{1}{n}}\Big),
\end{align}
which avoids the $\WC(\DCal_X, \tilde{\DCal}_{X})$ term. 

In what follows, we explore an alternative approach to highlight the importance of the {sampling approach adapted by (gradient-based) TGT}. By leveraging the variance-based generalization bound~\citep{Maurer2009_colt} that were previously utilized by \citet{menon21a} in the context distillation, we obtain the following result for the weighted empirical risk in \cref{eq:is-er}. 

\begin{proposition}
\label{thm:var-is}
Let $h$, $\ell_{\drm}$, $\FC$ and $\tilde{\SCal}_n$ be as defined in the statement of Theorem~\ref{thm:tgt-generalization}. Further, assume that $\ell^{h, {\rm IS}}_{\drm, f}(\tilde{x}) := \ell_{\drm}\big(f(\tilde{x}_i), h(\tilde{x})\big)\cdot \frac{p_{\DCal_X}(\tilde{x})}{p_{\tilde{\DCal}_X}(\tilde{x})}$ is bounded for all $\tilde{x} \in {\rm supp}(\tilde{D}_{X}$). Then, for any $f \in \FC$, the following holds with probability at least $1- \delta$.
\begin{align}
\label{eq:var-generalization-bound}
&R^{h}_{\ell_{\drm}, f}(\DCal_X)\leq  R^{h, {\rm IS}}_{\ell_{\drm}, f}(\tilde{\SCal}_n) + \text{\rm (I)}, %
\end{align}
where %
{\rm (I)} denotes
$$
\OC\bigg(\sqrt{\frac{{\rm Var}_{\tilde{\DCal}_X}(\ell^{h, {\rm IS}}_{\drm, f}(\tilde{x}))\cdot\log(\frac{\MC(n)}{\delta})}{n}}  + \frac{\log(\frac{\MC(n)}{\delta})}{n}\bigg).
$$
Here,
$\MC(n) 
= \sup_{\SCal_n \subset \XCal^n} \NC(1/n,  \GC^{h,  {\rm IS}}_{\ell_{\drm}, \FC}({\SCal}_n), \|\cdot\|_{\infty}), %
$
with $\NC(\epsilon, \GC^{h,  {\rm IS}}_{\ell_{\drm}, \FC}(\SCal_n) , \|\cdot\|_{\infty})$ denoting the covering number~\citep{devroye2013probabilistic} of the set
\begin{align}
 \GC^{h,  {\rm IS}}_{\ell_{\drm}, \FC}(\SCal_n) := \{(g(x_1),\ldots, g(x_n)) : g \in \GC^{h,  {\rm IS}}_{\ell_{\drm}, \FC}\}.   \nonumber
\end{align}
\end{proposition}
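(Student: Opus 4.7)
The plan is to establish the bound by combining an importance-sampling identity, Maurer's empirical Bernstein-type inequality for a fixed function, and a covering-number union bound that lifts the pointwise concentration to a uniform guarantee over $\GC^{h,{\rm IS}}_{\ell_{\drm},\FC}$. This mirrors the strategy employed by \citet{Maurer2009_colt} and \citet{menon21a}, adapted here so that the role of the ``empirical distribution'' is played by $\tilde{\DCal}_X$ with weights restoring $\DCal_X$.

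First, I would verify that $R^{h,{\rm IS}}_{\ell_{\drm},f}(\tilde{\SCal}_n)$ is an unbiased estimator of $R^{h}_{\ell_{\drm},f}(\DCal_X)$. Since $\DCal_X \ll \tilde{\DCal}_X$, a one-line change of measure gives
\begin{align*}
\mathbb{E}_{\tilde{x}\sim\tilde{\DCal}_X}\!\left[\ell^{h,{\rm IS}}_{\drm,f}(\tilde{x})\right]
= \mathbb{E}_{\tilde{x}\sim\tilde{\DCal}_X}\!\left[\ell_{\drm}(f(\tilde{x}),h(\tilde{x}))\cdot\tfrac{p_{\DCal_X}(\tilde{x})}{p_{\tilde{\DCal}_X}(\tilde{x})}\right]
= R^{h}_{\ell_{\drm},f}(\DCal_X).
\end{align*}
So controlling the target deviation reduces to controlling $\mathbb{E}_{\tilde{\DCal}_X}[g(\tilde{x})] - \frac{1}{n}\sum_i g(\tilde{x}_i)$ uniformly over $g \in \GC^{h,{\rm IS}}_{\ell_{\drm},\FC}$.

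Second, for any \emph{fixed} $g \in \GC^{h,{\rm IS}}_{\ell_{\drm},\FC}$, the boundedness hypothesis lets me apply the empirical/variance-based Bernstein bound of \citet{Maurer2009_colt} to the i.i.d.\ sample $\{g(\tilde{x}_i)\}_{i\in[n]}$, yielding with probability at least $1-\delta'$,
\begin{align*}
\mathbb{E}_{\tilde{\DCal}_X}[g(\tilde{x})] - \tfrac{1}{n}\sum_i g(\tilde{x}_i)
\;\lesssim\; \sqrt{\tfrac{{\rm Var}_{\tilde{\DCal}_X}(g(\tilde{x}))\,\log(1/\delta')}{n}} + \tfrac{\log(1/\delta')}{n}.
\end{align*}

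Third, to make this uniform over $\GC^{h,{\rm IS}}_{\ell_{\drm},\FC}$, I would invoke a covering-number argument. The restricted class $\GC^{h,{\rm IS}}_{\ell_{\drm},\FC}(\tilde{\SCal}_n) \subset \real^n$ admits a $(1/n)$-cover in $\|\cdot\|_\infty$ of size at most $\MC(n)$ by hypothesis. Apply the fixed-$g$ bound above to each cover representative with confidence $\delta/\MC(n)$ and union-bound. For an arbitrary $g \in \GC^{h,{\rm IS}}_{\ell_{\drm},\FC}$, its nearest cover element $g^\star$ satisfies $|g(\tilde{x}_i) - g^\star(\tilde{x}_i)| \le 1/n$ for all $i$, so the empirical means differ by $O(1/n)$ and (using boundedness) the variances differ by $O(1/n)$ as well; both discrepancies are dominated by the $\log(\MC(n)/\delta)/n$ term. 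Putting everything together gives precisely \cref{eq:var-generalization-bound} with the claimed form of (I).

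The main obstacle I anticipate is step three, specifically the bookkeeping needed to show that the sup-norm discretization error is absorbed into the \emph{lower-order} $\log(\MC(n)/\delta)/n$ term rather than inflating the leading $\sqrt{\cdot/n}$ term. One needs to be careful that replacing $g$ by $g^\star$ perturbs the variance only additively by $O(1/n)$, and that after taking $\sqrt{\cdot}$ this perturbation merges cleanly with the residual. An alternative, and perhaps cleaner, route is to cite a uniform empirical-Bernstein statement already phrased in terms of covering numbers (as in \citet{Maurer2009_colt}) rather than re-deriving the cover-and-union-bound step in full detail.
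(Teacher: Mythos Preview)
Your proposal is correct and follows essentially the same approach as the paper: establish the importance-sampling identity $R^{h,{\rm IS}}_{\ell_{\drm},f}(\tilde{\DCal}_X)=R^{h}_{\ell_{\drm},f}(\DCal_X)$ and then invoke the uniform variance-based (Bennett/Bernstein) concentration result of \citet{Maurer2009_colt} over the class $\GC^{h,{\rm IS}}_{\ell_{\drm},\FC}$. The only difference is one of presentation: the paper takes exactly the ``alternative, cleaner route'' you mention at the end---it directly cites the uniform empirical-Bernstein statement already phrased in terms of covering numbers, rather than spelling out your steps two and three (fixed-$g$ Bernstein plus cover-and-union-bound bookkeeping).
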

\begin{proof}
By utilizing the uniform convergence version of Bennet's inequality and uniform bound for $\sqrt{{\rm Var}_{\tilde{\SCal}_n}(\ell^{\rm IS}_{\drm}(\tilde{x}))}$, where ${\rm Var}_{\tilde{\SCal}_n}(\ell^{\rm IS}_{\drm}(\tilde{x}))$ denotes the empirical variance of $\ell^{\rm IS}_{\drm}(\tilde{x})$ based on $\tilde{\SCal}_n$, the following holds with probability at least $1- \delta$~\citep{Maurer2009_colt}.
\begin{align}
\label{eq:maurer}
R^{h, IS}_{\ell_{\drm}, f}(\tilde{\DCal}_X) \leq R^{h, {\rm IS}}_{\ell_{\drm}, f}(\tilde{\SCal}_n) + \OC\bigg(\sqrt{\frac{{\rm Var}_{\tilde{\DCal}_X}(\ell^{\rm IS}_{\drm}(\tilde{x}))\cdot\log(\frac{\MC(n)}{\delta})}{n}}  + \frac{\log(\frac{\MC(n)}{\delta})}{n}\bigg), \forall~f \in \FC.
\end{align}
Since, 
$R^{h, {\rm IS}}_{\ell_{\drm}, f}(\tilde{\DCal}_X) = \mathbb{E}_{\tilde{\DCal}_X}\left[{R^{h, {\rm IS}}_{\ell_{\drm}, f}(\tilde{\SCal}_n)}\right] = R^{h}_{\ell_{\drm}, f}(\DCal_X)$, 
the statement of Theorem~\ref{thm:var-is} follows from \cref{eq:maurer}.
\end{proof}

\begin{remark}%
\cref{eq:var-generalization-bound} suggests general approach to select the distribution $\tilde{\DCal}_X$ that generated the training samples $\tilde{\SCal}_n$. In order to ensure small generalization gap, it is desirable that the variance term ${\rm Var}_{\tilde{\DCal}_X}(\ell^{\rm IS}_{\drm}(\tilde{x}))$ is as small as possible. Note that, the distribution that minimizes this variance takes the form 
\begin{align}
\label{eq:var-min-dist}
\log p_{\tilde{\DCal}_X}(x) \propto \log \ell_{\drm}\big(f(x), h(x)\big) + \log p_{{\DCal}_X}(x),~\forall~x \in \XCal.
\end{align}
This looks like the lagrangian form of~\cref{eq:new_desire}.
Interestingly, TGT framework with {\em gradient-based} sampling (cf.~\eqref{eq:tgt-gradient}) focuses on generating samples that maximizes the right hand side $RHS$ of \cref{eq:var-min-dist} by first taking a sample generated according to $\DCal_X$ and then perturbing it in the \emph{latent space} to maximize the loss $\ell_{\drm}\big(f(x), h(x)\big)$. Thus, the resulting distribution $\tilde{\DCal}_X$ has pdf that aims to approximate the variance minimizing pdf in \cref{eq:var-min-dist}.

Here it is worth pointing out that, since exact form of $p_{\tilde{\DCal}_X}(\cdot)$ and $p_{{\DCal}_X}(\cdot)$ is generally not available during the training, it's not straightforward %
to optimize the weighted risk introduced in \cref{eq:is-er}.

\end{remark}

\begin{remark}
\label{rem:is-erm}
Note that, as introduced in Section~\ref{sec:tgt}, TGT framework optimizes the empirical risk in \cref{eq:erm-dist} as opposed to minimizing \cref{eq:is-er}. In this case, one can obatain a variance based bound analogous to \cref{eq:var-generalization-bound} that takes the form:
\begin{align}
\label{eq:var-generalization-bound-erm}
R^{h}_{\ell_{\drm}, f}(\DCal_X)\leq  R^{h}_{\ell_{\drm}, f}(\tilde{\SCal}_n) + \text{\rm (II)} + \WC(\DCal_X, \tilde{\DCal}_X),
\end{align}
where, {\rm (II)} denotes
$$
\OC\bigg(\sqrt{\frac{{\rm Var}_{\tilde{\DCal}_X}(\ell^{h}_{\drm, f}(\tilde{x}))\cdot\log(\frac{\MC(n)}{\delta})}{n}}  + \frac{\log(\frac{\MC(n)}{\delta})}{n}\bigg),
$$
with $\ell^{h}_{\drm, f}(\tilde{x}) := \ell_{\drm}\big(f(\tilde{x}_i), h(\tilde{x})\big)$ and $\MC(n)$ depending the covering number for the induced function class $\GC^h_{\ell_{\drm}, \FC}$ (cf.~\cref{eq:distill-induced-fc}). Notably, this bound again incurs a penalty of { $\WC(\DCal_X, \tilde{\DCal}_X)$} which is expected to be small for our TGT based sampling distribution when we employ high-quality teacher generator.
\end{remark}

\section{Toolbox}
\label{app:toolbox}

This section presents necessary definitions and  lemmas that we utilize to establish our theoretical results presented in \cref{sec:tgt} (and restated in \cref{appen:deferred}.

\begin{definition}[Wasserstein-1 metric]
\label{def:wasserstein}
Let $(\XCal, \rho)$ be a metric space. Given two probability distributions $\DCal^1_X$ and $\DCal^2_X$ over $\XCal$, Wasserstein-1 distance between $\DCal^1_X$ and $\DCal^2_X$ is defined as follows.
\begin{align}
\WC(\DCal^1_X, \DCal^2_X) := \inf_{\pi \in \Pi(\DCal^1_X, \DCal^2_X)} \mathbb{E}_{X, X' \sim \pi}\left[ d(X, X') \right] = \inf_{\pi \in \Pi(\DCal^1_X, \DCal^2_X)} \int_{\XCal \times \XCal} \rho(X, X') \,d\pi(x,x'),
\end{align}
where $\Pi(\DCal^1_X, \DCal^2_X)$ denotes the set of all joint distributions over $\XCal \times \XCal$ that have $\DCal^1_X$ and $\DCal^2_X$ as their marginals.
\end{definition}

\begin{lemma}[Kantorovich-Rubinstein duality~\citep{villani2008optimal}]
\label{lem:villani}
Let ${\rm Lip}_1(\rho)$ denote the set of all 1-Lipschitz functions in the metric $\rho$, i.e., for any $f \in {\rm Lip}_1(\rho)$, 
\begin{align}
|f(x) - f(x')| \leq \rho(x, x'),~\forall~x, x'.
\end{align}
Then, 
\begin{align}
\WC(\DCal^1_X, \DCal^2_X) = \sup_{f \in {\rm Lip}_1(\rho)} \left(\mathbb{E}_{X \sim \DCal^1_X}\left[f(X)\right] - \mathbb{E}_{X' \sim \DCal^2_X}\left[f(X')\right]\right).
\end{align}
\end{lemma}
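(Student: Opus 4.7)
The plan is to establish the two inequalities $\WC(\DCal^1_X, \DCal^2_X) \geq \sup_{f \in {\rm Lip}_1(\rho)} \bigl(\mathbb{E}_{\DCal^1_X}[f] - \mathbb{E}_{\DCal^2_X}[f]\bigr)$ and $\WC(\DCal^1_X, \DCal^2_X) \leq \sup_{f \in {\rm Lip}_1(\rho)} \bigl(\mathbb{E}_{\DCal^1_X}[f] - \mathbb{E}_{\DCal^2_X}[f]\bigr)$ separately; the first is elementary, while the second invokes general Kantorovich duality.

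For the easy direction ($\geq$), I would fix an arbitrary $f \in {\rm Lip}_1(\rho)$ and an arbitrary coupling $\pi \in \Pi(\DCal^1_X, \DCal^2_X)$. Using the marginal property of $\pi$, I can write
\begin{align*}
\mathbb{E}_{X \sim \DCal^1_X}[f(X)] - \mathbb{E}_{X' \sim \DCal^2_X}[f(X')]
= \int_{\XCal \times \XCal}\bigl(f(x) - f(x')\bigr)\, d\pi(x,x')
\leq \int_{\XCal \times \XCal} \rho(x,x')\, d\pi(x,x'),
\end{align*}
where the inequality uses $|f(x) - f(x')| \leq \rho(x,x')$. Taking the infimum over $\pi \in \Pi(\DCal^1_X, \DCal^2_X)$ on the right and then the supremum over $f \in {\rm Lip}_1(\rho)$ on the left yields the $\geq$ direction from Definition~\ref{def:wasserstein}.

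The hard direction ($\leq$) requires the general Kantorovich duality theorem for the optimal transport problem with cost $c(x,x') = \rho(x,x')$, which states
\begin{align*}
\WC(\DCal^1_X, \DCal^2_X) \;=\; \sup_{(\phi,\psi) \in \Phi_\rho}\Bigl(\mathbb{E}_{\DCal^1_X}[\phi(X)] + \mathbb{E}_{\DCal^2_X}[\psi(X')]\Bigr),
\end{align*}
where $\Phi_\rho = \{(\phi,\psi) \in L^1(\DCal^1_X) \times L^1(\DCal^2_X) : \phi(x) + \psi(x') \leq \rho(x,x') \;\forall x,x'\}$. This is where I would appeal to Villani's treatment. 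Given this, I would pass to the $c$-transform: replace $\phi$ by $\phi^{cc}$ and $\psi$ by $\phi^c(x') := \inf_x\bigl(\rho(x,x') - \phi(x)\bigr)$, which can only improve the objective while preserving feasibility.

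The final step exploits that $c = \rho$ is itself a metric. I would show that for any admissible $\phi$, the function $g(x') := -\phi^c(x') = \sup_x\bigl(\phi(x) - \rho(x,x')\bigr)$ is $1$-Lipschitz (difference of suprema of $1$-Lipschitz functions in $x'$), and that one may take $\psi = -g$ with $\phi = g$ in the dual, i.e., the optimal dual pair can be reduced to $(g, -g)$ with $g \in {\rm Lip}_1(\rho)$. Substituting back gives
\begin{align*}
\WC(\DCal^1_X, \DCal^2_X) \;\leq\; \sup_{g \in {\rm Lip}_1(\rho)} \Bigl(\mathbb{E}_{\DCal^1_X}[g(X)] - \mathbb{E}_{\DCal^2_X}[g(X')]\Bigr),
\end{align*}
completing the proof. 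The main obstacle is the Kantorovich duality theorem itself, whose proof requires either a Hahn--Banach/minimax argument or LP duality on an approximating discrete problem; in a paper of this style I would simply cite \citep{villani2008optimal} for that step and focus the exposition on the $c$-transform reduction to ${\rm Lip}_1(\rho)$.
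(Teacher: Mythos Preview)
Your proposal is a correct outline of the standard proof of Kantorovich--Rubinstein duality: the easy direction via the marginal constraint and the Lipschitz bound, and the hard direction via general Kantorovich duality followed by the $c$-transform reduction that collapses the dual pair $(\phi,\psi)$ to $(g,-g)$ with $g\in{\rm Lip}_1(\rho)$ when the cost is a metric.

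However, the paper does \emph{not} actually prove this lemma. It is stated in the toolbox appendix purely as a quoted result with the citation \citep{villani2008optimal} in the lemma header, and is then used as a black box in the proof of Lemma~\ref{lem:distill-wasserstein}. So there is no ``paper's own proof'' to compare against; you have supplied strictly more than the paper does. Your closing remark that ``in a paper of this style I would simply cite \citep{villani2008optimal}'' is exactly what the authors did.
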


\begin{lemma}
\label{lem:distill-wasserstein}
Let $\ell_{\drm}: \real^{K} \times \real^{K} \to \real$ be a loss function employed during the distillation. For a given teacher $h : \XCal \to \real^{K}$ and a function class $\FC$, we assume the the induced function class 
\begin{align}
\GC^h_{\ell_{\drm},  \FC} = \{x \mapsto  \ell_{\drm}(f(x), h(x)) : f \in \FC\}   
\end{align}
is contained in the class of $L$-Lipschitz functions with respect to a metric $\rho$. Then, for any two distributions $\DCal^1_{X}$ and $\DCal^2_{X}$, we have
\begin{align}
R^h_{\ell_{\drm}, f}(\DCal^1_X) - R^h_{\ell_{\drm}, f}(\DCal^2_X)  
\leq \WC(\DCal^1_{X}, \DCal^2_{X}), \quad \forall~f \in \FC, 
\end{align}
where $\WC(\DCal^1_{X}, \DCal^2_{X})$ denotes the Wasserstein-1 metric between the two distribution $\DCal^1_{X}$ and $\DCal^2_{X}$ (cf.~Definition~\ref{def:wasserstein}).
\end{lemma}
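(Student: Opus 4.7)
The plan is to reduce the lemma to a direct application of the Kantorovich--Rubinstein framework in Lemma~\ref{lem:villani}, or equivalently to the primal coupling definition of $\WC$ in Definition~\ref{def:wasserstein}. The key observation is that fixing the teacher $h$ and the student $f$ collapses the difference of distillation risks into the difference of expectations of a \emph{single} scalar-valued function under two different distributions, and by hypothesis that function is Lipschitz.

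First, I would introduce the shorthand $g_f(x) := \ell_{\drm}(f(x), h(x))$, so that $g_f \in \GC^h_{\ell_{\drm}, \FC}$ and is therefore $L$-Lipschitz with respect to $\rho$. By the definition of the population distillation risk in \cref{eq:distill-pr}, we have $R^h_{\ell_{\drm}, f}(\DCal^i_X) = \mathbb{E}_{X \sim \DCal^i_X}[g_f(X)]$ for $i \in \{1, 2\}$, hence the left-hand side of the claimed inequality rewrites as
\begin{equation*}
R^h_{\ell_{\drm}, f}(\DCal^1_X) - R^h_{\ell_{\drm}, f}(\DCal^2_X) = \mathbb{E}_{X \sim \DCal^1_X}[g_f(X)] - \mathbb{E}_{X' \sim \DCal^2_X}[g_f(X')].
\end{equation*}

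Next, I would proceed through the coupling characterization of $\WC$, since it is both transparent and avoids quoting the duality as a black box. For any coupling $\pi \in \Pi(\DCal^1_X, \DCal^2_X)$, by linearity of expectation and the $L$-Lipschitz property of $g_f$,
\begin{align*}
\mathbb{E}_{X \sim \DCal^1_X}[g_f(X)] - \mathbb{E}_{X' \sim \DCal^2_X}[g_f(X')] &= \mathbb{E}_{(X, X') \sim \pi}\bigl[ g_f(X) - g_f(X') \bigr] \\
&\leq \mathbb{E}_{(X, X') \sim \pi}\bigl[ L \cdot \rho(X, X') \bigr].
\end{align*}
Taking the infimum over all couplings $\pi \in \Pi(\DCal^1_X, \DCal^2_X)$ on the right yields $L \cdot \WC(\DCal^1_X, \DCal^2_X)$ by Definition~\ref{def:wasserstein}, giving the desired bound (the factor of $L$ in the stated inequality is implicitly absorbed, corresponding to the normalization $g_f / L \in \mathrm{Lip}_1(\rho)$ that Lemma~\ref{lem:villani} would otherwise require).

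There is no real obstacle here; the argument is a textbook use of the primal Wasserstein characterization. The only items worth being careful about are (i) the direction of the inequality---we need an upper bound, so we pass to $\leq$ under an arbitrary coupling and then take the infimum---and (ii) the bookkeeping with $L$, which is why the statement is naturally read as a bound of $L \cdot \WC$ that the lemma packages for later use (e.g., in plugging into Thm.~\ref{thm:tgt-generalization}, where only the order $\WC(\DCal_X, \tilde{\DCal}_X)$ matters).
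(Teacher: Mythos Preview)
Your proof is correct and reaches the same bound $L \cdot \WC(\DCal^1_X, \DCal^2_X)$ as the paper, but it gets there by a different route. The paper's argument goes through the \emph{dual} side: it bounds the difference of risks by $\sup_{g \in \GC^h_{\ell_{\drm},\FC}}(\mathbb{E}_{\DCal^1_X}[g] - \mathbb{E}_{\DCal^2_X}[g])$, rescales by $L$ so that $g/L \in \mathrm{Lip}_1(\rho)$, enlarges the supremum to all of $\mathrm{Lip}_1(\rho)$, and then invokes Kantorovich--Rubinstein duality (Lemma~\ref{lem:villani}) to identify that supremum as $\WC(\DCal^1_X,\DCal^2_X)$. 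You instead work on the \emph{primal} side, fixing a single $g_f$ and passing through an arbitrary coupling $\pi \in \Pi(\DCal^1_X,\DCal^2_X)$ to get $\mathbb{E}_{\pi}[g_f(X)-g_f(X')] \leq L\,\mathbb{E}_{\pi}[\rho(X,X')]$, then taking the infimum over $\pi$ and reading off Definition~\ref{def:wasserstein} directly. Your approach is slightly more elementary in that it never needs the duality theorem at all---only the marginal constraints on $\pi$ and the Lipschitz bound---whereas the paper's version treats KR as a black box but makes the connection to $\mathrm{Lip}_1$ explicit. Both arguments are one-liners once set up, and both land on the same $L\cdot\WC$ (the bare $\WC$ in the lemma statement is, as you note, just absorbing the constant).
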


\begin{proof}
Note that 
\begin{align}
R^h_{\ell_{\drm}, f}(\DCal^1_X) - R^h_{\ell_{\drm}, f}(\DCal^2_X) &= \mathbb{E}_{X \sim \DCal^1_{X}}\left[\ell_{\drm}(f(X), h(X)\right] - \mathbb{E}_{X' \sim \DCal^2_{X}}\left[\ell_{\drm}(f(X'), h(X')\right] \nonumber \\
& \leq \sup_{g \in \GC^h_{\ell_{\drm},  \FC}} \left(\mathbb{E}_{X \sim \DCal^1_{X}}\left[g(X)\right] - \mathbb{E}_{X' \sim \DCal^1_{X}}\left[g(X')\right] \right) \nonumber \\
&\overset{(i)}{=} L \cdot \sup_{g \in \GC^h_{\ell_{\drm},  \FC}} \left(\mathbb{E}_{X \sim \DCal^1_{X}}\left[\frac{g(X)}{L}\right] - \mathbb{E}_{X' \sim \DCal^1_{X}}\left[\frac{g(X')}{L}\right] \right) \nonumber \\
&\overset{(ii)}{\leq} L \cdot \sup_{g \in {\rm Lip}_1(\rho)} \left(\mathbb{E}_{X \sim \DCal^1_{X}}\left[g(X)\right] - \mathbb{E}_{X' \sim \DCal^1_{X}}\left[g(X')\right] \right), \nonumber \\
&\overset{(iv)}{=}\WC(\DCal^1_X, \DCal^2_X).
\end{align}
where $(i)$ follow by dividing and multiply by $L$; $(ii)$ follows as, for any $g \in \GC^h_{\ell_{\drm},  \FC}$ is $\frac{g}{L}$ is $1$-Lipschitz; and $(iii)$ follows from Lemma~\ref{lem:villani}. 
\end{proof}

\begin{lemma}
\label{lem:exp-distill-loss}
Let the distillation loss $\ell_{\drm}$ satisfy Assumption~\ref{assum:distill-decomposition} with a bounded loss function $\ell:\real^K \times \YCal \to \real$. Then, given a teacher $h: \XCal \to \real^K$ and a student model $f: \XCal \to \real^K$, we have
\begin{align}
&\Big\vert R^h_{\ell_{\drm}, f}(\DCal_X) - R_{\ell, f}(\DCal) \Big\vert
 \leq  \OC\big({\sqrt{K}}\cdot\mathbb{E}_{\DCal_X}\left[\|\DCal_{Y|X} - h(X)\|_2\right]\big), 
\end{align}
where $\DCal_{Y|X} = (\DCal_{Y|X}(1),\ldots, \DCal_{Y|X}(K))$ is treated as a vector in $\real^K$.
\end{lemma}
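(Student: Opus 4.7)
The plan is to reduce the difference $R^h_{\ell_{\drm}, f}(\DCal_X) - R_{\ell, f}(\DCal)$ to a pointwise inner product in $\R^K$, then apply Cauchy--Schwarz and exploit the boundedness of $\ell$. Specifically, using Assumption~\ref{assum:distill-decomposition},
\begin{align*}
R^h_{\ell_{\drm}, f}(\DCal_X) = \mathbb{E}_{X \sim \DCal_X}\bigg[\sum_{y \in [K]} h(X)_y \cdot \ell(f(X), y)\bigg],
\end{align*}
while by the tower rule applied to the joint expectation,
\begin{align*}
R_{\ell, f}(\DCal) = \mathbb{E}_{X \sim \DCal_X}\bigg[\sum_{y \in [K]} \DCal_{Y|X}(y) \cdot \ell(f(X), y)\bigg].
\end{align*}
Subtracting these two expressions, the integrand becomes $\langle h(X) - \DCal_{Y|X}, \vl(f(X))\rangle$, where $\vl(f(X)) := (\ell(f(X), 1), \ldots, \ell(f(X), K)) \in \R^K$ is the vector of per-label losses.

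Next, I would apply Cauchy--Schwarz pointwise (for each fixed $X$) to obtain
\begin{align*}
\big|\langle h(X) - \DCal_{Y|X}, \vl(f(X))\rangle\big| \leq \|h(X) - \DCal_{Y|X}\|_2 \cdot \|\vl(f(X))\|_2.
\end{align*}
Since $\ell$ is bounded, say $|\ell(\cdot, \cdot)| \leq B$ for some constant $B$, each coordinate of $\vl(f(X))$ is at most $B$ in absolute value, so $\|\vl(f(X))\|_2 \leq \sqrt{K}\, B$. Substituting this uniform bound and then taking expectation over $X \sim \DCal_X$ yields
\begin{align*}
\big| R^h_{\ell_{\drm}, f}(\DCal_X) - R_{\ell, f}(\DCal)\big| \leq \sqrt{K}\, B \cdot \mathbb{E}_{\DCal_X}\big[\|\DCal_{Y|X} - h(X)\|_2\big],
\end{align*}
which is exactly the $\OC(\sqrt{K}\,\mathbb{E}_{\DCal_X}[\|\DCal_{Y|X} - h(X)\|_2])$ bound claimed in the lemma.

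There is no substantive obstacle here; the argument is essentially a one-line Cauchy--Schwarz after the Assumption~\ref{assum:distill-decomposition} decomposition. The only small subtlety to check is that the $\sqrt{K}$ factor is tight in the sense that it arises naturally from converting the $\ell_\infty$ bound on $\vl(f(X))$ to an $\ell_2$ bound so that it matches the $\ell_2$ distance appearing in the statement; one could alternatively apply H\"older's inequality with $(\ell_1, \ell_\infty)$, which would give a bound involving $\|\DCal_{Y|X} - h(X)\|_1$ without the $\sqrt{K}$ factor, but the stated form in terms of the $\ell_2$ distance is what is invoked elsewhere in the paper.
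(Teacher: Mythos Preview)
Your proposal is correct and follows essentially the same route as the paper: expand both risks via Assumption~\ref{assum:distill-decomposition} and the tower rule, write the difference as a pointwise inner product, apply Cauchy--Schwarz inside the expectation, and then use boundedness of $\ell$ to extract the $\sqrt{K}$ factor. The paper's proof is slightly terser but identical in substance.
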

\begin{proof}
Note that 
\begin{align}
\Big\vert R^h_{\ell_{\drm}, f}(\DCal_X) - R_{\ell, f}(\DCal) \Big\vert & = \Big\vert\mathbb{E}_{\DCal_{X}}[\ell_{\drm}(f(X), h(X))] - R_{\ell, f}(\DCal) \Big\vert \nonumber \\
& = \Big\vert\mathbb{E}_{\DCal_{X}}[\ell_{\drm}(f(X), h(X))] - \mathbb{E}_{\DCal}[\ell(f(X), Y)] \Big\vert \nonumber \\
& = \Big\vert \mathbb{E}_{\DCal_{X}}\Big[\sum_{y \in [K]} h(X)_y\cdot \ell(f(x), y)\Big]   -  \mathbb{E}_{\DCal_{X}}\Big[\sum_{y \in [K]} \DCal_{Y|X}(y)\cdot \ell(f(X), y)\Big]    \Big\vert \nonumber \\
& =  \Big\vert \mathbb{E}_{\DCal_{X}}\Big[\sum_{y \in [K]} \big(h(X)_y - \DCal_{Y|X}(y)\big)\cdot \ell(f(X), y)\Big]  \Big\vert \nonumber \\
& \overset{(i)}{\leq}  \mathbb{E}_{\DCal_{X}}[\|\DCal_{Y|X} - h(X)\|_2\cdot \|\ell(f(X))\|_2],
\end{align}
where $(i)$ follow from the Cauchy-Schwarz inequality. Now the statement of Lemma~\ref{lem:exp-distill-loss} follows from the assumption on the loss $\ell$ is bounded.
\end{proof}

\section{Additional experiments}
\label{appen:exp}

\subsection{Long-tail image classification}
Please see \cref{tbl:long-tail-image-appendix} for Places365-LT result. Discussion is in \cref{sec:exp-img-lt}.

\begin{table*}[t]
\centering
    \hspace*{-3mm}%
    \scalebox{0.8}{ %
    \begin{small}
\renewcommand{\arraystretch}{1.15}
\begin{tabular}{@{}lllccc@{}}
\toprule
& \textbf{Approach} & \textbf{Architecture}  & \textbf{Balanced Accuracy} & \textbf{\# parameters} & \textbf{FLOPs} \\
\toprule 
\multirow{9}{*}{\rotatebox[origin=c]{90}{Places365-LT}} 
& LWS \citep{Kang2020Decoupling}& ResNet-152 & 37.6 & 60 M & 11 B\\
& {LDAM-DRS-RSG} \citep{Wang_2021_CVPR}& ResNet-152 & 39.3 & 60 M & 11 B \\
& {OLTR} \citep{Liu_2019_CVPR} & ResNet-152 & 35.9 & 60 M & 11 B \\
& {DRAGON + Bal'Loss} \citep{samuel2020longtail}& ResNet-50 & 38.1 & 26 M & 4.1 B \\
\cmidrule{2-6}
& \cellcolor{Gray} \emph{Teacher (labeler) model}  & \cellcolor{Gray} EfficientNet-b3 & \cellcolor{Gray} 42.1 & \cellcolor{Gray} 12 M & \cellcolor{Gray} 1.8 B \\
& One-hot & MobileNetV3-0.75 & 26.8 & 4.01 M & 156 M\\
& Distillation &  MobileNetV3-0.75 & 33.0 & 4.01 M & 156 M\\
& TGT (random)  &  MobileNetV3-0.75 & 34.7 & 4.01 M & 156 M\\
& TGT (gradient-based)  &  MobileNetV3-0.75 & 35.0 &  4.01 M & 156 M\\
\bottomrule
\end{tabular}
\end{small}
}
\caption{Performance of TGT on Places-LT~\citep{liu2019large}. 
The table shows the top-1 accuracy on the corresponding balanced eval sets for TGT and different long-tail baselines from the literature (taken from~\citep{samuel2020longtail}). We also state the number of model parameters and inference cost (in terms of FLOPs) for all the methods. Note that TGT leads to performance improvements over standard distillation. %
Note that, for Places-LT, TGT does not outperform stated baselines for the literature that rely on specialized loss function and/or training procedures designed from the long-tail setting. One reason for this could be that the BigBiGAN does not generate very informative samples for Places-LT due to distribution mismatch. That said, as discussed in \Cref{sec:exp-img-lt}, one can combine the TGT framework with a long-tail specific loss function as opposed to employing the standard cross-entropy loss function as a way to further improve its performance.
}
\label{tbl:long-tail-image-appendix}
\end{table*}

\section{Details to reproduce our empirical results}
\label{appen:reproduce}

Hereby we provide details to reproduce our experimental results.

\subsection{Long-tail image classification (Sec.~\ref{sec:exp-img-lt})}

\textbf{Dataset.}
The full balanced version of 3 datasets
(ImageNet~\footnote{\url{https://www.tensorflow.org/datasets/catalog/imagenet2012}},
Place365~\footnote{\url{https://www.tensorflow.org/datasets/catalog/places365_small}},
SUN397~\footnote{\url{https://www.tensorflow.org/datasets/catalog/sun397}})
are available in tensflow-datasets (\url{https://www.tensorflow.org/datasets/}).
Next to obtain the the long-tail version of the datasets, we downloaded~\footnote{\url{https://drive.google.com/drive/u/1/folders/1j7Nkfe6ZhzKFXePHdsseeeGI877Xu1yf}} image ids from repository of "Large-Scale Long-Tailed Recognition in an Open World~\citep{Liu_2019_CVPR}" according to which we subsampled the full balanced dataset. 

\textbf{Teacher fine-tuning.} For teacher labeler, we follow "Sharpness Aware Minimization'~\citep{foret2020sharpness} codebase (available at \url{https://github.com/google-research/sam}) to fine-tune on the long-tail datasets.
We start with pretrained EfficientNet-B3 model checkpoint available from official repository\footnote{\url{https://storage.googleapis.com/gresearch/sam/efficientnet_checkpoints/noisystudent/efficientnet-b3/checkpoint.tar.gz}} and used default parameters from the codebase. We fine-tuned all 3 datasets (ImageNet-LT, SUN397-LT, Place365-LT) for 3 epochs.

We directly used teacher generator as BigBiGAN ResNet-50 checkpoint from the official repository~\url{https://github.com/deepmind/deepmind-research/tree/master/bigbigan}. (We did not fine-tune it.)

\textbf{Student training.}
We start from randomly initialized MobileNetV3-0.75 model. 
We employed SGD optimizer with cosine schedule (peak learning rate of 0.4 and decay down to 0). We also did a linear warm-up (from 0 to peak learning rate of 0.4) for first 5 epochs.
The input image size are unfortunately different between EfficientNet-B3 model, BigBiGAN-ResNet50, and MobileNetV3-0.75 models. 
From original images in dataset, we use Tensorflow's bicubic resizing to obtain appropriate size image for each mode.
We did a grid search over the perturbation parameters $\sigma$ and $\eta$ (c.f. \cref{eq:tgt-random} and \cref{eq:tgt-gradient}).
All hyper-parameters and grid are listed in table below:

\begin{table}[h]
    \centering
    \begin{tabular}{@{}lccc@{}}
    \toprule
    Hyper-param & ImageNet-LT & Place365-LT & Sun397-LT \\
    \midrule
    Num epochs & 90 & 30 & 30  \\
    Optimizer & \multicolumn{3}{c}{SGD}  \\
    Schedule & \multicolumn{3}{c}{Cosine} \\
    Warm-up epochs & \multicolumn{3}{c}{5} \\
    Peak learning rate & \multicolumn{3}{c}{0.4} \\
    Batch size &  \multicolumn{3}{c}{256} \\
    Teacher labeler image size &  \multicolumn{3}{c}{$300 \times 300 \times 3$} \\
    Teacher generator image size &  \multicolumn{3}{c}{$256 \times 256 \times 3$} \\
    Student image size &  \multicolumn{3}{c}{$224 \times 224 \times 3$} \\
    Perturbation noise ($\sigma$) & \multicolumn{3}{c}{\{0, 0.001, 0.01, 0.1\}} \\
    Gradient exploration \\
    - Step size ($\eta$) & \multicolumn{3}{c}{\{0, 0.001, 0.01, 0.1\}} \\
    - Num steps & \multicolumn{3}{c}{2} \\
    \bottomrule
    \end{tabular}
    \caption{Hyper-parameters for long-tail image classification}
    \label{tab:image_cls_params}
\end{table}

\subsection{TGT in low-data regime (Sec.~\ref{sec:exp-sample})}

\textbf{Dataset.}~We used ImageNet~\footnote{\url{https://www.tensorflow.org/datasets/catalog/imagenet2012}} dataset from tensflow-datasets repository (\url{https://www.tensorflow.org/datasets/}). We used in-built sub-sampling functionality available in tensorflow (\url{https://www.tensorflow.org/datasets/splits}) to simulate the low-data regime.

\textbf{Teacher model.} 
For teacher labeler, we directly used trained EfficientNet-B3 model checkpoint available from "Sharpness Aware Minimization" repository\footnote{\url{https://storage.googleapis.com/gresearch/sam/efficientnet_checkpoints/noisystudent/efficientnet-b3/checkpoint.tar.gz}} 
For teacher generator, we directly used trained BigBiGAN checkpoint from the official repository~\url{https://github.com/deepmind/deepmind-research/tree/master/bigbigan}. (We did not fine-tune either of the models.)

\textbf{Student training.}
We start from randomly initialized MobileNetV3-0.75 model. 
We employed SGD optimizer with cosine schedule (peak learning rate of 0.4 and decay down to 0). We also did a linear warm-up (from 0 to peak learning rate of 0.4) for first 5 epochs.
The input image size are unfortunately different between EfficientNet-B3 model, BigBiGAN-ResNet50, and MobileNetV3-0.75 models. 
From original images in dataset, we use Tensorflow's bicubic resizing to obtain appropriate size image for each mode.
Following standard practice in literature~\citet{he2016deep,jia2018highly}, we train one-hot and standard distillation student models for 90 epochs (= 450k steps).
We use 4x less steps for TGT than the simple distillation baseline, which amounts to 450k/4 = 112k steps.

\subsection{Text classification (Sec.~\ref{sec:exp-text})}

\textbf{Dataset.}~We conduct text classification experiments on following datasets:
\begin{itemize}
    \item Amazon-5 downloaded from \url{http://goo.gl/JyCnZq}
    \item IMDB from tensorflow-datasets \url{https://www.tensorflow.org/datasets/catalog/imdb_reviews}
    \item MNLI from from tensorflow-datasets \url{https://www.tensorflow.org/datasets/catalog/multi_nli}
    \item Yelp-5 downloaded from \url{http://goo.gl/JyCnZq}
\end{itemize}

\textbf{Optimizer.}~For all training, we employed ADAM optimizer with linear decay schedule (peak learning rate of 3e-5 and decay to 0).
We also did a linear warm-up at start. 
We used batch size of 128.

\textbf{Teacher fine-tuning.} 
For teacher labeler, we started from RoBERTa-Base~\citep{liu2019roberta} pretrained checkpoint~\footnote{\url{https://dl.fbaipublicfiles.com/fairseq/models/roberta.base.tar.gz}} from official FAIRSEQ repository~\url{https://github.com/facebookresearch/fairseq}.
We fine-tuned using default parameters, other than number of steps which are same as those listed in Table~\ref{tab:text_cls_params}.

For teacher generator, we directly use a pre-trained BART-Base~\citep{lewis2020bart} checkpoint~\footnote{\url{https://dl.fbaipublicfiles.com/fairseq/models/bart.base.tar.gz}} from official FAIRSEQ repository~\url{https://github.com/facebookresearch/fairseq}. (We did not fine-tune it.)

\textbf{Student training.}~We start from DistillBERT pretrained checkpoint downloaded from HuggingFace repository~\footnote{https://huggingface.co/distilroberta-base/tree/main}.
We perturb by adding Gaussian noise of $\sigma^2$ variance in between encoder-decoder as well as masking out $p$ fraction of input.
Then we generate new examples by running a greedy decoding of BART teacher generator for sequence length of 512.
For dual input classification task, like in MNLI, we generate the two inputs independently.
We did a grid search over the perturbation parameters $\sigma$ and masking fraction $p$.
All hyper-parameters and grid are listed in table below:

\begin{table}[h]
    \centering
    \begin{tabular}{@{}lcccccc@{}}
    \toprule
    \textbf{Hyper-param} & \multicolumn{2}{c}{\textbf{Amazon-5}} & \textbf{IMDB} & \textbf{MNLI} & \multicolumn{2}{c}{\textbf{Yelp-5}} \\
    & 2.5k & 3M & & & 2.5k & 650k \\
    \midrule
    Num steps & 5000 & 75000 & 20000 & 75000 & 5000 & 75000  \\
    Warm-up steps & 1000 & 2000 & 500 & 2000 & 1000 & 2000 \\
    Optimizer & \multicolumn{6}{c}{Adam}  \\
    Schedule & \multicolumn{6}{c}{Linear} \\
    Peak learning rate & \multicolumn{6}{c}{3e-5} \\
    Batch size &  \multicolumn{6}{c}{128} \\
    Max Sequence length &  \multicolumn{6}{c}{512} \\
    Perturbation noise ($\sigma$) & \multicolumn{6}{c}{\{0, 0.01, 0.1\}} \\
    Masking fraction ($p$) & \multicolumn{6}{c}{\{0, 0.1, 0.2\}} \\
    \bottomrule
    \end{tabular}
    \caption{Hyper-parameters for student training of text classification}
    \label{tab:text_cls_params}
\end{table}

\subsection{Text retrieval (Sec.~\ref{sec:exp-retrieval})}

\textbf{Dataset.}~From official "Dense Passage Retrieval" repository at \url{https://github.com/facebookresearch/DPR}, we download passage corpus~\footnote{\url{https://dl.fbaipublicfiles.com/dpr/wikipedia_split/psgs_w100.tsv.gz}}.
Further, from the same repository, we download a pre-processed version of natural questions open~\citep{lee-etal-2019-latent} which has been aligned to passage corpus~\footnote{\url{https://dl.fbaipublicfiles.com/dpr/data/retriever/biencoder-nq-train.json.gz,https://dl.fbaipublicfiles.com/dpr/data/retriever/biencoder-nq-dev.json.gz}}.
Finally, we download a pre-processed version of PAQ dataset~\citep{lewis2021paq} dataset from official repository of "Domain-matched Pre-training Tasks for Dense Retrieval" available at \url{https://github.com/facebookresearch/dpr-scale} which has been aligned to the same passage corpus ~\footnote{\url{https://dl.fbaipublicfiles.com/dpr_scale/paq/PAQ.dpr.train.neg1.jsonl.zip}}

\textbf{Optimizer.}~For all text retrieval model training, we employed ADAM optimizer with linear decay schedule (peak learning rate of 1e-5 and decay to 1e-7). We also did a linear warm-up (from 0 to peak learning rate of 1e-5) for 1K steps. We used batch size of 128.

\textbf{Teacher fine-tuning.}~For teacher labeler dual encoder (a question encoder and a passage encoder), we utilized RoBERTa-Base~\citep{liu2019roberta} pretrained checkpoint~\footnote{\url{https://dl.fbaipublicfiles.com/fairseq/models/roberta.base.tar.gz}} from official FAIRSEQ repository~\url{https://github.com/facebookresearch/fairseq}.
We then conducted first round of fine-training for 300k iterations with passage-aligned PAQ dataset.
We used same configuration as \citet{ouguz2021domain} except \citeauthor{ouguz2021domain} trained with PAQ longer. 
After the pretraining, the teacher is fine-tuned on NQ-open~\citep{kwiatkowski2019natural} downloaded  with 40K steps. 
Similar to~\citet{karpukhin2020dense,ouguz2021domain}, the teacher is trained with within-batch negatives and the softmax-based cross-entropy loss.

For teacher generator, we directly use a pre-trained BART-Base~\citep{lewis2020bart} checkpoint~\footnote{\url{https://dl.fbaipublicfiles.com/fairseq/models/bart.base.tar.gz}} from official FAIRSEQ repository~\url{https://github.com/facebookresearch/fairseq}. (We did not fine-tune it.)

\emph{This same teacher labeler and generator is used for all student training except for the direct training (one-hot).}

\textbf{Student training.}~We start from DistillBERT pretrained checkpoint downloaded from HuggingFace repository~\footnote{https://huggingface.co/distilroberta-base/tree/main}.
All students are trained with 40K steps.
The teacher labeler will label all-pair within the batch and will label additional 2 passages per each question-passage pair for the uniform negative sampling baseline and TGT. We employed a off-the-shelf BART-base model as our generator~\citep{lewis2020bart} and isotropic perturbation was added by random Gaussian noise of scale $\sigma=0.1$ combined with $p=0.2$ for masking the original passage.

\section{Qualitative Examples of Generated Examples}

\subsection{Image Classification}

We show some representative examples of generated images using TGT-random as well as TGT-gradient based from the experiment on ImageNet classification in Table~\ref{tab:image-examples}.

\begin{table}[t]
    \centering
    \begin{tabular}{|c|c|c|}
    \toprule
    Input & TGT-Random Example & TGT-Gradient based Example\\
    \midrule
         Data label: \texttt{lion}
         &
         Teacher label: \texttt{brown bear}
         & 
         Teacher label: \texttt{chow}
         \\
         \includegraphics[width=0.3\textwidth]{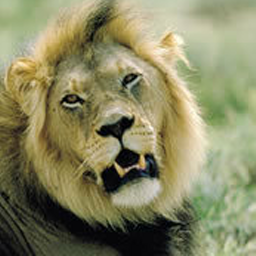}
         &
         \includegraphics[width=0.3\textwidth]{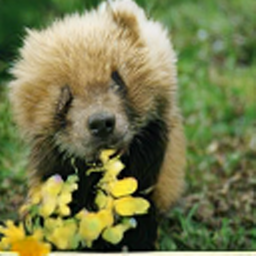}
         & 
         \includegraphics[width=0.3\textwidth]{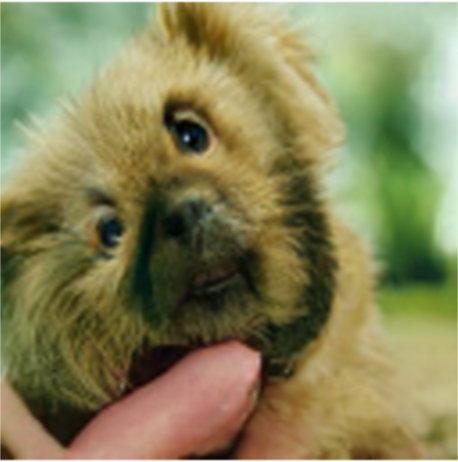}\\
    \midrule
         Data label: \texttt{cheeseburger}
         &
         Teacher label: \texttt{potpie}
         & 
         Teacher label: \texttt{cheeseburger}
         \\
         \includegraphics[width=0.3\textwidth]{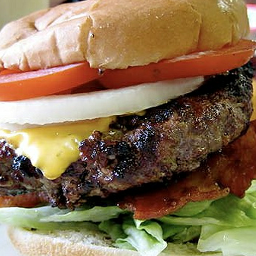}
         &
         \includegraphics[width=0.3\textwidth]{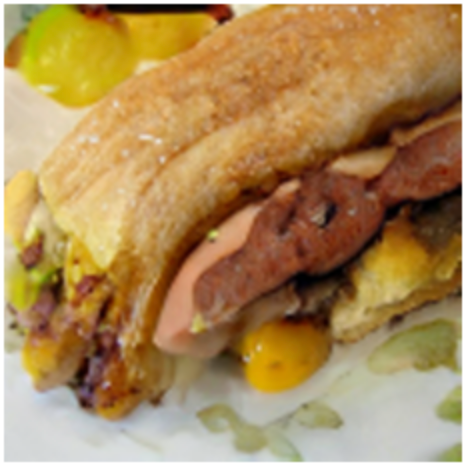}
         & 
         \includegraphics[width=0.3\textwidth]{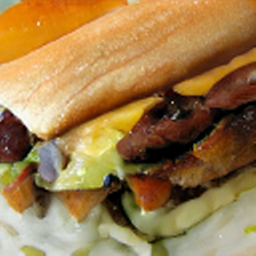}\\
    \midrule
         Data label: \texttt{digital clock}
         &
         Teacher label: \texttt{tape player}
         & 
         Teacher label: \texttt{grocery store}
         \\
         \includegraphics[width=0.3\textwidth]{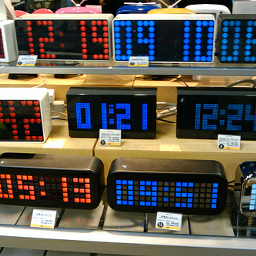}
         &
         \includegraphics[width=0.3\textwidth]{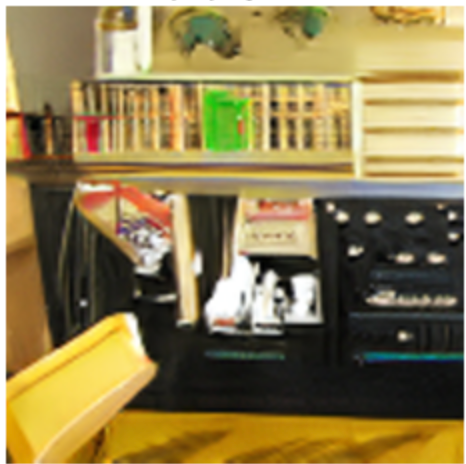}
         & 
         \includegraphics[width=0.3\textwidth]{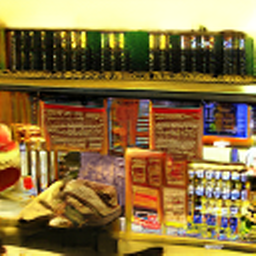}\\
    \midrule
         Data label: \texttt{wall clock}
         &
         Teacher label: \texttt{shield}
         & 
         Teacher label: \texttt{gong}
         \\
         \includegraphics[width=0.3\textwidth]{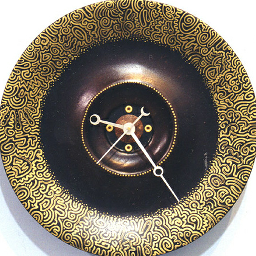}
         & 
         \includegraphics[width=0.3\textwidth]{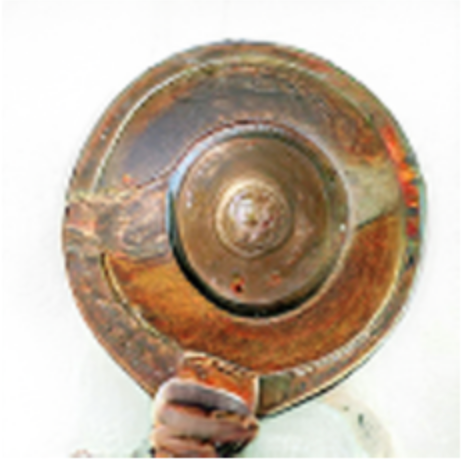}
         &
         \includegraphics[width=0.3\textwidth]{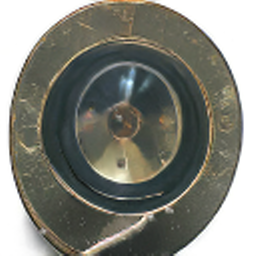}\\
    \bottomrule
    \end{tabular}
    \caption{Image examples}
    \label{tab:image-examples}
\end{table}

\subsection{Text Classification}
We show some representative examples of generated text using TGT from the experiment on MNLI classification in Table~\ref{tab:text-examples}.

\begin{table}[t]
    \centering
    \small %
    \hspace*{-5mm} %
    \begin{tabular}{|p{75mm}|p{75mm}|}  %
    \toprule
    Input & TGT Example\\
    \midrule
     Data label: \texttt{Contradicts}
     &
     Teacher label: \texttt{Neutral}
     \\[3mm]
     \texttt{The house was bought with the royalties she earned from her first book, The Tales of Peter Rabbit. [SEP] The house was bought with the money she inherited from her grandfather.}
     & \texttt{The book was published in the United States in 1987 with the royalties she received from her first book, The Tales of Peter Rabbit. [SEP] The house was bought with the money she inherited from her grandfather.}\\
    \midrule
     Data label: \texttt{Entail}
     &
     Teacher label: \texttt{Entail}
     \\[3mm]
     \texttt{Leather goods are no longer a bargain in Spain, though very good quality products may still be priced lower than at home. [SEP] Leather goods are still very cheap in Spain.}
     & \texttt{Leather and leather goods are no longer a bargain in Spain, though very good quality products may still be priced lower than at home and abroad. [SEP] Leather goods are still very cheap at Spain.}\\
    \midrule
     Data label: \texttt{Entail}
     &
     Teacher label: \texttt{Neutral}
     \\[3mm]
     \texttt{Then I got up as softly as I could, and felt in the dark along the left-hand wall. [SEP] The wall was wet.}
     & \texttt{Then I got up as softly as I could, and walked the way I felt in the dark along the left [SEP] The wall was wet.} \\
    \midrule
     Data label: \texttt{Entails}
     &
     Teacher label: \texttt{Entail}
     \\[3mm]
     \texttt{But then this very particular island is hardly in danger of being invaded except, of course, by tourism. [SEP] This island is least likely to be invaded by tourism.}
     & \texttt{But then this very particular island is not in danger of being invaded except, of course, by tourism. [SEP] The island is likely to be invaded by tourism.}\\
    \midrule
     Data label: \texttt{Contradicts}
     &
     Teacher label: \texttt{Neutral}
     \\[3mm]
     \texttt{All you need to do is just wander off the beaten path, beyond the bustling tourist zone. [SEP] There is no point going off the beaten path, there is nothing there.}
     & \texttt{All you need to do is just wander off the beaten path, and you\'ll be in the bustling tourist zone of the city. [SEP] There is no point going off the beaten path, there is nothing there.}\\
    \midrule
     Data label: \texttt{Entails}
     &
     Teacher label: \texttt{Neutral}
     \\[3mm]
     \texttt{The silt of the River Maeander has also stranded the once-mighty city of Miletus. [SEP] The River Maeander has been depositing silt near Miletus for nearly two millennia.}
     & \texttt{The silt of the River Mae has also stranded the once-mighty city of Miletus. [SEP] The River Maeander has been depositing silt near Miletus for more than two decades.}\\
    \midrule
     Data label: \texttt{Entails}
     &
     Teacher label: \texttt{Entails}
     \\[3mm]
     \texttt{It was hardly the most enlightened of times, not with the conflict in Indochina rapidly becoming America\'s costliest and most divisive war. [SEP] The war in Indochina has cost America 100 billion dollars so far.}
     & \texttt{It was hardly the most enlightened of times, not with the war in Indochina becoming America\'s costliest and most divisive war. [SEP] The war in Indochina has cost America 100 billion dollars so far.}\\
    \bottomrule
    \end{tabular}
    \caption{Text examples}
    \label{tab:text-examples}
\end{table}

\end{document}